\documentclass[letterpaper, 10 pt, conference]{ieeeconf}  % Comment this line out if you need a4paper
\IEEEoverridecommandlockouts                              % This command is only needed if 
                                                          % you want to use the \thanks command
\overrideIEEEmargins                                      % Needed to meet printer requirements.

\usepackage{graphics,epsfig}
\usepackage{amsmath,amssymb,bm,bbm}
\usepackage{arydshln}

\usepackage{blindtext}

\usepackage{amsthm}

\usepackage{xcolor}
\usepackage[hidelinks]{hyperref}

\newtheorem{problem}{Problem}
\newtheorem{definition}{Definition}
\newtheorem{assumption}{Assumption}
\newtheorem{theorem}{Theorem}
\newtheorem{lemma}{Lemma}
\newtheorem{remark}{Remark}

\usepackage[ruled,vlined]{algorithm2e}
\SetKwInOut{Input}{input}
\SetKwInOut{Output}{output}
\SetKwRepeat{Do}{do}{while}

\SetCommentSty{mycommfont}

\usepackage{booktabs}

\usepackage{tikz}

\title{\LARGE \bf
A Global Games-Inspired Approach to 
Multi-Robot Task Allocation for Heterogeneous Teams
}

\author{Logan E. Beaver, \textit{Member, IEEE}% <-this % stops a space
\thanks{*This work was supported by the Virginia Space Grant Consortium.}% <-this % stops a space
\thanks{Logan E. Beaver is with Department of Mechanical \& Aerospace Engineering,
        Old Dominion University, Norfolk, VA 23529 USA
        {\tt\small lbeaver@odu.edu}}%%
}

\usepackage{amsthm}

\begin{document}

\maketitle
\thispagestyle{empty}
\pagestyle{empty}

%%%%%%%%%%%%%%%%%%%%%%%%%%%%%%%%%%%%%%%%%%%%%%%%%%%%%%%%%%%%%%%%%%%%%%%%%%%%%%%%
\begin{abstract}
In this article we propose a game-theoretic approach to the multi-robot task allocation problem using the framework of global games.
Each task is associated with a global signal, a real-valued number that captures the task execution progress and/or urgency. 
We propose a linear objective function for each robot in the system, which, for each task, increases with global signal and decreases with the number assigned robots.
We provide conditions on the objective function hyperparameters to induce a mixed Nash equilibrium, i.e., solutions where all robots are not assigned to a single task.
The resulting algorithm only requires the inversion of a matrix to determine a probability distribution over the robot assignments.
We demonstrate the performance of our algorithm in simulation and provide direction for applications and future work.
\end{abstract}

%%%%%%%%%%%%%%%%%%

%%%%%%%%%%%%%%%%%%%%%%%%%%%%%%%%%%%%%%%%%%%%%%%%%%%%%%%%%%%%%%%%%%%%%%%%%%%%%%%%
\section{Introduction}

As we continue deploying robots in real outdoor environments, there is a growing interest in automating data collection and operations in inaccessible environments, such as remote sensing \cite{naderi2022sharing}, environmental data collection \cite{dunbabin2012robots}, ecological monitoring \cite{notomista2022multi}, and extraterrestrial environments \cite{beaver2025MRTA}.
Automated solutions to these problems require a robust, resilient, and flexible approach to allocate many robots to a variety of tasks over long time durations.

These problems fall broadly in the domain of Multi-Robot Task Allocation (MRTA).
MRTA is one of the most challenging open problems in mutli-robot systems research \cite{khamis2015multi}; despite the great number of MRTA algorithms, critical components, such as dynamic task allocation and the use of heterogeneous robots, present a significant open problem in the literature.
In this work, we consider a general allocation approach that allows one or many heterogeneous robots to be assigned to multiple tasks with time-varying dynamics.
Specifically, we present a global games-inspired strategy for multi-robot task allocation that is scalable, computationally efficient, and performant.
Our approach is also resilient to the spontaneous addition and removal of robots to the system, as well as the dynamic addition and removal of available tasks.
Furthermore, our approach allows heterogeneity in the capability of the robots, with more capable robots being assigned to tasks first.

In general, MRTA requires solving a combinatorial optimization problem to assign robots to tasks, and this does not lend itself to the kind of reactive, robust, and resilient formulation required for dynamic allocation.
This is apparent from a recent MRTA survey article \cite{chakraa2023optimization}, which estimates that at least $75\%$ of the MRTA literature requires either 1) multiple rounds of coordination between robots or 2) offline computation of a static strategy.
These both require either a centralized authority to solve a combinatorial optimization problem, or it necessitates multiple rounds of communication until all robots reach a consensus on the task assignment.
As a consequence, any major change in the system (e.g., the addition or removal of robots) requires the assignment to be re-computed.
In contrast, our approach only requires robots to estimate the intensity of a scalar \textit{signal} related to each task and the number of robots currently assigned to it.
This can be achieved by a centralized database that stores information but perform any computations on behalf of the robots, or through a standard consensus mechanism.

While there have been advances in robot allocation for dynamic systems \cite{notomista2019optimal}, game-theoretic approaches make up only about $4\%$ of published solutions \cite{chakraa2023optimization}.
A majority of approaches consider only the formation of coalitions for task assignment as a game \cite{MARTIN2023104314}, which clusters robots into teams that are then assigned to tasks using combinatorial optimization.
In fact, \cite{chakraa2023optimization} contains only a singular reference to global games for task allocation under noisy observations \cite{kanakia2016modeling}, along with our previous work \cite{beaver2025MRTA}, is the only MRTA framework to explicitly use global games.

In a global game, robots in the system measure a global \textit{signal} (or stimulus) that corresponds to a task, e.g., the amount of energy remaining for a foraging task \cite{Krieger2000} the intensity of a fire in a fire-fighting task \cite{kanakia2016modeling}.
It has been proven that, under reasonable conditions, the Nash equilibrium for a global game reduces to a threshold strategy \cite{kanakia2016modeling,Mahdavifar2018GlobalSharing}.
This means that robots simply compare their measurement of the global signal to an internal threshold, then assign themselves to the task when the threshold is exceeded.
A similar kind of approach was used in a biologically-inspired colony maintenance problem \cite{krieger2000call} without the formalism of global games for assignment.
While a threshold strategy based on a global signal is appealing, our previous work demonstrated that this is neither resilient nor adaptive \cite{beaver2025MRTA}.
Global games have desirable properties for a one-shot assignment, for example, modeling a bank run or debt crisis \cite{Mahdavifar2018GlobalSharing}.
However, these strategies do not adapt to changes in the global signal (or stimulus), and they are not robust to the addition or removal of robots \cite{beaver2025MRTA}.

To summarize, the contributions of this article are as follows:
\begin{enumerate}
    \item a global-games based framework for the assignment of multiple heterogeneous robots to multiple tasks without explicit communication (Algorithm \ref{alg:elimination}),
    \item a reduction of the heterogeneous assignment problem to the homogeneous case (Lemma \ref{lma:reduction}),
    \item proof that more capable robots are always assigned to tasks first (Remark \ref{rmk:capable}), and
    \item illustrative numerical examples that demonstrate the dynamic and robust allocation of our algorithm (Section \ref{sec:sim}).
\end{enumerate}

The remainder of the article is organized as follows.
We present our global game framework and working assumptions in Section \ref{sec:problem}.
We derive the  mixed Nash Equilibrium for assignment and propose a numerically tractable algorithm to assign robots to tasks in real time in Section \ref{sec:solution}.
We present two simulation result sin Section \ref{sec:sim} that demonstrates the resilience of our approach to the removal of robots and shows the performance of our approach for a heterogeneous system.
Finally, we present concluding remarks and directions for future work in Section \ref{sec:conclusion}.

\subsection{Notation}

In this work we consider repeated rounds of game with a mixed Nash equilibrium.
For simplicity, we use the following notational rules for variables,
\begin{itemize}
    \item Lower case letters denote a variable known a priori.
    \item Upper case letters denote random variables.
    \item Subscripts denote task index(es).
    \item Superscripts denote the index of a homogeneous group of robots.
\end{itemize}

Under these rules, $n_k$ is the number of robots currently assigned to task $k$, while $N_k$ is a random variable that describes the number of robots that will be assigned to task $k$ after the assignment, and $p_k^i$ is the probability of assigning robot $i$ to task $k$.

We use the absolute value sign to sum vector elements in the heterogeneous case, so $\big|\bm{n}_k\big| = \big|[n_k^1,\, n_k^2,\, \dots]^{\intercal}\big| := n_k^1 + n_k^2 + \dots$ is the total number of robots from each group $1, 2, \dots,$ that are assigned to task $k$.

\section{Problem Formulation} \label{sec:problem}

We consider a team of $N$ mobile robots seeking to complete up to $M$ tasks in the local environment.
Each of the $m \in\{1, 2, \dots, M\}$ tasks has an associated scalar signal,
\begin{equation} \label{eq:signal}
    s_m(t) \in \left[0, 1\right],
\end{equation}
which is a scalar metric for the ``completeness'' of the task.
Specifically, $s_m \approx 1$ implies that task $m$ is satisfied and $s_m \approx 0$ implies that the task is at risk of failure.
Each signal is data-driven with possibly unknown dynamics, and its exact dynamics are application dependent.
For example, $s_k$ may be the (normalized) voltage in a battery that changes with weather and and applied load; or in a surveillance problem, $s_k$ could be inversely proportional to the time that a location was last visited by a robot.

We propose a \textit{mechanism design} solution to allocate robots to tasks within a global games framework, i.e., we design the utility functions for each robot to ensure the Nash equilibrium of the induced game allocates more capable robots to tasks, and more robots are assigned to tasks with signals closer to zero.
We also seek to avoid trivial solutions where all robots are assigned to a single task, and allow for some robots to be assigned to no task; this makes our approach robust to the addition and removal of robots and tasks.
To start, introduce a definition for our game \cite{chremos2020game}.
\begin{definition} \label{def:game}
    A finite normal-form game is a tuple 
    \begin{equation*}
        \mathcal{G} = \big(\mathcal{I}, \mathcal{S}, u_i \big),
    \end{equation*}
    where
    \begin{itemize}
        \item $\mathcal{I} = \{1, 2, \dots, N\}$ is the set of players,
        \item $\mathcal{S}=\{0,1,\dots,M\}$ is the set of strategies, and
        \item $u_i,\, i\in\mathcal{I}$ is the utility function for each player.
    \end{itemize}
\end{definition}
Note that the set of strategies in Definition \ref{def:game} is a collection of integer variables, where $0$ corresponds to doing no task (i.e., \textit{idling}) and  $1, 2, \dots, M$ corresponds to an assignment to that task.
Next, we define the utility for each robot and it's main properties.
\begin{definition} \label{def:utility}
    Each robot $i\in\mathcal{I}$ takes an action $a\in\mathcal{S}$ that yields a utility,
    \begin{equation}
        u_i(a, \bm{n}, \bm{s}) \to \mathbb{R},
    \end{equation}
    where $\bm{n} = [n_1,\,n_2,\, \dots,\, n_M]$ counts the number of robots assigned to each task and $\bm{s} = \big[s_1(t),\, s_2(t),\, \dots,\, s_M(t)\big]$ is a vector of global signals.
\end{definition}

Within this framework, we employ the following technical assumptions for our approach.
%%% write the assumptions here
\begin{assumption} \label{smp:monotonicity}
    The utility function for each robot (Definition \ref{def:utility} is
     (i)  strictly decreasing in each element of $\bm{s}$, and
    (ii) strictly decreasing in each element of $\bm{n}$.
\end{assumption}
\begin{assumption} \label{smp:information}
    Each robot has access to a database of information containing the signal vector $\bm{s} = [s_1, \dots, s_N]$ and the assignment vector $\bm{n} = [n_1, \dots, n_N]$.
\end{assumption}

\begin{assumption} \label{smp:hysteresis}
    Only idle robots (assigned to task $0$) participate in the assignment game, and each robot remains assigned to their task for some non-zero interval of time before switching back to the idle task.
\end{assumption}

Assumptions \ref{smp:monotonicity}--\ref{smp:information} are technical assumptions required for our problem to be well-posed and for the robots to be able to generate a solution.
We demonstrate how violating Assumption \ref{smp:monotonicity} can lead to cascading system failures in our previous work \cite{beaver2025MRTA}.
Assumption \ref{smp:information} is required for the robots to compute a solution, but it could be relaxed, for example, by implementing a consensus strategy and considering the communication network between robots.
Assumption \ref{smp:hysteresis} is the most restrictive; it ensures that robots do not rapidly switch between their assigned task and idling while also simplifying the Nash equilibrium calculations.
Other aspects of our framework, e.g., the domain of our signal $s_k(t) \in [0, 1]$ or the specific utility function \eqref{eq:utility}, can be relaxed by following our analysis.

Here we consider a linear utility functions for each robot.
This enables us to generate closed-form analytical solutions for the assignment.
Nonlinearities can be introduced to the assignment through the design of the signal function $s_k(t)$, which maps some state variables of the system to a scalar value.
In particular, we consider for each robot $i\in\mathcal{I}$,
\begin{equation} \label{eq:utility}
    u_i(a, \bm{n}, \bm{s}) = \sum_{k=1}^M \mathbbm{1}_k\Big(\frac{\gamma_k - n_k}{\gamma_k} - s_k - c_k^i \Big),
\end{equation}
where for each task $k$,
$n_k$ is the number of robots assigned to the task, $\gamma_k$ is a tuning parameter, $c^i_{k}$ is the cost of assigning robot $i$ to task $k$, and $\mathbbm{1}_k \in\{0, 1\}$ is an indicator function that is $1$ when the robot is assigned to task $k$ and $0$ otherwise.

The intuition behind the utility function \eqref{eq:utility} is as follows.
The first term normalizes $n_k$ by some $\gamma_k$ that, all else being equal, roughly corresponds to the fraction of all robots that should be assigned to task $k$.
The cost $c_k^i$ is the cost of assigning robot $i$ to task $k$, which can capture both the capability of robot $i$ to do the task, the energy required to do the task, and indirect factors such as wear-and-tear and failure risk.
The the signal $s_k(t)$ satisfies Assumption \ref{smp:monotonicity}, and $\mathbbm{1}_k$ is an indicator that ensures robots consider the cost of each task individually.

Finally, we consider the definitions of strictly dominant and mixed strategy equilibrium, which will characterize the Nash equilibrium induced by our utility function \eqref{eq:utility}.

\begin{definition}[Strictly Dominant Strategy] \label{def:dom-strat}
The action $a_i^*$ is a \textit{strictly dominant strategy} for a given $\bm{s}$ if and only if
\begin{equation}
    u_i(a_i^*, n, \bm{s}) > u_i(a_i', n, \bm{s}),
\end{equation}
for  all $n$ and $a_i' \neq a_i^*$. 
\end{definition}

\begin{definition}
[Mixed Strategy] \label{def:mix-strat}
A mixed strategy for a given $\bm{s}$ is a probability distribution over a support $\delta\subseteq\mathcal{S}$ such that,
\begin{equation}
    \mathbb{E}\Big[u_i(a, n, \bm{s}) \Big] = \mathbb{E}\Big[u_i(a', n, \bm{s}) \Big],
\end{equation}
for all $a, a' \in\delta$.
\end{definition}

As described in the problem formulation, we wish to avoid strategies where a strictly dominant strategy exists.
This is a trivial assignment problem where all robots are assigned to a single task.
Instead, we develop a useful mixed strategy to assign robots to tasks based on the current value of the signal vector $\bm{s}(t)$ and the number of robots currently assigned to each task.

\section{Solution Approach} \label{sec:solution}

To derive our solution to the assignment problem, we will first derive the Nash equilibrium induced by our linear objective function (Definition \ref{def:utility}) and prove that it always exists.
To achieve this, we start with the marginal utility of robot $i\in\mathcal{A}$ selecting between tasks $0,k,j\in\mathcal{S}$,
\begin{align}
    \pi^i_{k0} &= \frac{\gamma_k - N_k}{\gamma_k} - s_k - c_k^i, \label{eq:pik0} \\
    \pi^i_{kj} &= \frac{\gamma_k - N_k}{\gamma_k} - s_k - c_k^i - \frac{\gamma_j - N_j}{\gamma_j} + s_j + c_j^i.\label{eq:pikj}
\end{align}
Here, $N_k, N_j$ are \textit{random variables} corresponding to the number of robots that will be assigned to tasks $k$ and $j$, respectively.
To find the mixed strategy Nash equilibrium, we first find the solution for a homogeneous system following our existing approach \cite{beaver2025MRTA}.
Then, we find the general solution for a heterogeneous team and prove that it reduces to the homogeneous solution with minor modifications.

\subsection{Homogeneous Systems}
To assign a team of homogeneous robots to tasks, we let $c_k^i = 0$ for all robots without loss of generality, and we drop the superscript $i$ for simplicity in this section.
For a mixed strategy, each robot $i\in\mathcal{I}$ has a probability $p_k$ of assigning itself to task $k\in\mathcal{S}$; this is a Bernoulli random variable with expectation,
\begin{equation}
    \mathbb{E}\left[\mathbbm{1}_k\right] = 1\cdot p_k + 0\cdot(1-p_k) = 1\cdot p_k.
\end{equation}
The number of robots assigned to task $k$ is a collection of Bernoulli random variables, which follows a Binomial distribution.
Thus, under Assumption \ref{smp:hysteresis}, the number of robots assigned to task $k$ after the assignment is,
\begin{equation} \label{eq:expectation}
    \mathbb{E}[N_k] = n_k + n_0\,p_k.
\end{equation}

\subsubsection{Idle Task Feasible, Homogeneous}
We first consider the case where assignment to the idle task is feasible, i.e., $0\in\delta$ (Definition \ref{def:mix-strat}).

\begin{lemma} \label{lma:can-idle}
    If $0\in\delta$, i.e., doing nothing is a feasible action, then each robot $i$ assigns itself to task $k\in\delta\setminus\{0\}$ with probability,
    \begin{equation} \label{eq:lma1}
        p_k = \frac{\gamma_k}{n_0}\Big(1 - s_k - \frac{n_k}{\gamma_k}\Big).
    \end{equation}
\end{lemma}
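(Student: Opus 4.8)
The plan is to obtain $p_k$ directly from the defining property of a mixed Nash equilibrium (Definition \ref{def:mix-strat}): every action in the support yields the same expected utility. Since $0 \in \delta$ by hypothesis, any task $k \in \delta \setminus \{0\}$ must give the robot the same expected utility as idling. Because idling sets every indicator in \eqref{eq:utility} to zero, it yields utility exactly $0$, so the expected marginal utility of switching from idle to task $k$ must vanish. This marginal utility is precisely $\pi_{k0}$ in \eqref{eq:pik0}, so the equilibrium condition reduces to the single scalar equation $\mathbb{E}[\pi_{k0}] = 0$ for each $k$ in the support.

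First I would specialize \eqref{eq:pik0} to the homogeneous case by setting $c_k = 0$, giving $\pi_{k0} = (\gamma_k - N_k)/\gamma_k - s_k$. Taking the expectation and using linearity, the only random quantity is $N_k$, so I would substitute $\mathbb{E}[N_k] = n_k + n_0\,p_k$ from \eqref{eq:expectation}. The indifference equation then becomes
\[
    \frac{\gamma_k - n_k - n_0\,p_k}{\gamma_k} - s_k = 0,
\]
which is linear in $p_k$. Solving for $p_k$ and factoring out $\gamma_k / n_0$ yields exactly \eqref{eq:lma1}.

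The calculation itself is routine; the crux of the argument is the structural observation that the linear form of \eqref{eq:utility}, together with the zero baseline from idling, decouples the equilibrium into one independent indifference condition per task. In particular, because the utility is affine in $N_k$, only the first moment of the Binomial assignment count is needed, and I never have to reason about higher moments or the joint distribution across tasks. The main point to verify carefully is therefore the reduction step: confirming that with $0 \in \delta$ the binding condition is genuinely $\mathbb{E}[\pi_{k0}] = 0$, rather than a condition coupling task $k$ to some other task $j$ through $\pi_{kj}$ in \eqref{eq:pikj}. This holds precisely because idling provides a common zero reference, making the pairwise comparison against task $0$ the operative constraint.
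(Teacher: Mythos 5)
Your proposal is correct and takes essentially the same route as the paper's proof: both enforce the mixed-strategy indifference condition by setting $\mathbb{E}[\pi_{k0}]=0$ using \eqref{eq:pik0}, substitute $\mathbb{E}[N_k]=n_k+n_0\,p_k$ from \eqref{eq:expectation}, and solve the resulting linear equation for $p_k$. Your added observations---that idling provides a zero utility baseline so the pairwise comparisons $\pi_{kj}$ are implied by transitivity, and that linearity means only the first moment of the Binomial count is needed---are sound elaborations of the same argument, not a different approach.
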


\begin{proof}
    By the definition of a mixed strategy, we set the marginal utility \eqref{eq:pik0} equal to zero, which implies,
    \begin{equation}
        E\big[\pi_{k0}\big] = \frac{\gamma_k - \mathbb{E}[N_k]}{\gamma_k} - s_k = 0.
    \end{equation}
    Substituting \eqref{eq:expectation} and solving for $p_k$ completes the proof.
\end{proof}

Note that Lemma \ref{lma:can-idle} also gives the values of $s_k$ where a mixed strategy exists.
Setting $p_k$ equal to $0$ or $1$ yields the signal range,
\begin{equation} \label{eq:signal-range}
    s_k \in \left[\, 1 - \frac{(n_0 + n_{k})}{\gamma_k},  1 - \frac{n_k}{\gamma_k} \right].
\end{equation}
Any signal less than the lower bound of \eqref{eq:signal-range} is a strictly dominant action, and thus Lemma \ref{lma:can-idle} does not apply.
Conversely, any signal larger than the upper bound of \eqref{eq:signal-range} is strictly dominated by remaining idle. In this case, Lemma \ref{lma:can-idle} yields a negative number, and we can consider the probability of assigning any robot to a dominated task $k$ as $p_k = 0$.

Lemma \ref{lma:can-idle} also describes when remaining idle is \textbf{not} a feasible action, i.e., when $0\not\in\delta$.
Namely, the probability of remaining idle is,
\begin{equation}
    p_0 = 1 - \sum_{k=1}^M \max(p_k, 0).
\end{equation}
Thus, $p_0 < 0$ implies that remaining idle is a strictly dominated strategy.

\subsubsection{Idle Task Infeasible, Homogeneous}

Next, we look at the case where remaining idle is dominated by some subset of all strategies, i.e., $0\not\in\delta$.

\begin{lemma} \label{lma:cant-stop}
When remaining idle is strictly dominated, the Nash equilibrium for task assignment is given by,
\begin{equation} \label{eq:big-matrix}
    \begin{bmatrix}
        \gamma_2 & -\gamma_1 & 0 & \dots & 0 \\
        \gamma_3 & 0  & -\gamma_1 & \dots & 0 \\
        \vdots & \vdots & \vdots & \ddots & \vdots\\
        \gamma_M & 0 & 0 & 0 & -\gamma_1\\
        1 & 1 & \dots & \dots & 1
    \end{bmatrix}
    \begin{bmatrix}
    p_1 \\ p_2 \\ p_3 \\ \vdots \\ p_K    
    \end{bmatrix}
    =
    \begin{bmatrix}
        b_2 \\ b_3 \\ \vdots \\ b_{M-1} \\ 1
    \end{bmatrix},
\end{equation}
where
\begin{equation}
    b_j = \frac{\gamma_1\,\gamma_j(s_1 - s_j) + n_1\gamma_j - n_j\gamma_1}{n_0},
\end{equation}
for indices $\{1, 2, \dots, M\} = \delta$.
\end{lemma}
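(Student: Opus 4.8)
The plan is to read \eqref{eq:big-matrix} as the stacking of two kinds of conditions: $M-1$ mixed-strategy indifference equations and one probability-normalization equation. Since remaining idle is strictly dominated, the support in Definition \ref{def:mix-strat} is the full task set $\delta = \{1, 2, \dots, M\}$, so every task in $\delta$ must yield the same expected marginal utility. Rather than impose indifference across all $\binom{M}{2}$ pairs, I would pivot on task $1$ and use only the conditions $\mathbb{E}[\pi_{1j}] = 0$ for $j = 2, \dots, M$; because indifference is an equivalence relation, if task $1$ ties with every other task then all tasks tie with one another, so these $M-1$ equations capture every equilibrium condition without redundancy.

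For each pivot equation I would take $\pi_{1j}$ from \eqref{eq:pikj} (setting $c_k^i = 0$ for the homogeneous case), substitute the expected post-assignment counts $\mathbb{E}[N_k] = n_k + n_0\,p_k$ from \eqref{eq:expectation}, and clear denominators by multiplying through by $\gamma_1\gamma_j$. Collecting the $p_1$ and $p_j$ terms on one side and dividing by $n_0$ then isolates a single linear relation of the form $\gamma_j\,p_1 - \gamma_1\,p_j = b_j$, with the constant $b_j$ gathering the signal difference $\gamma_1\gamma_j(s_1 - s_j)$ and the current-count terms $n_1\gamma_j - n_j\gamma_1$ exactly as stated. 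These $M-1$ relations reproduce the upper block of the coefficient matrix, whose first column is $[\gamma_2, \gamma_3, \dots, \gamma_M]^{\intercal}$ and whose remaining columns carry $-\gamma_1$ on the diagonal.

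To close the $M \times M$ system I would append the normalization constraint. Because idle is infeasible, each participating robot is assigned to some task in $\{1, \dots, M\}$ with total probability one, giving $\sum_{k=1}^M p_k = 1$; this is precisely the all-ones bottom row with right-hand side $1$. Stacking the $M-1$ indifference equations above this normalization yields the square system \eqref{eq:big-matrix}, and its solution (whenever the coefficient matrix is nonsingular) is the mixed Nash equilibrium probability vector.

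I expect the main obstacle to be twofold and conceptual rather than computational. First, the pivot reduction must be justified carefully: I would confirm that the task-$1$ indifference conditions genuinely imply all other pairwise indifferences, so that no equilibrium requirement is silently dropped. Second, solving the linear system does not by itself guarantee a valid distribution, so I would need to verify that each resulting $p_k$ lies in $[0,1]$; this implicitly restricts the admissible signals $\bm{s}$ and counts $\bm{n}$ for which this interior equilibrium exists, mirroring the feasibility window \eqref{eq:signal-range} obtained in Lemma \ref{lma:can-idle}. The denominator-clearing and sign bookkeeping in the middle step are routine by comparison; the analytical weight sits in these two checks.
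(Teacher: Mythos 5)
Your proposal follows essentially the same route as the paper's proof: pivot on task $1$ to obtain the $M-1$ pairwise indifference equations $\mathbb{E}[\pi_{1j}]=0$ from \eqref{eq:pikj} with \eqref{eq:expectation} substituted, then append the law-of-total-probability row $\sum_{k\in\delta}p_k=1$ to close the square system \eqref{eq:big-matrix}. Your two flagged checks are also consistent with the paper, which disposes of them by asserting the matrix is row-equivalent to the identity (hence invertible, giving existence and uniqueness) and by deferring the $p_k\in[0,1]$ feasibility question to the support-selection machinery of Lemmas \ref{lma:can-idle} and \ref{lma:reduction} and Algorithm \ref{alg:elimination}.
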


\begin{proof}
By definition, the expected value of the marginal utility between any tasks $i,j\in\delta$ is equal to zero,
\begin{equation}
    \pi_{kj} = \frac{\gamma_k - E[N_k]}{\gamma_k} - s_k - \frac{\gamma_j - E[N_j]}{\gamma_j} + s_j = 0.
\end{equation}
Substituting \eqref{eq:expectation} and re-arranging terms yields,
\begin{equation} \label{eq:expectation-expanded}
    \gamma_k p_j - \gamma_j p_k
    =
    \frac{\gamma_k\gamma_j(s_k - s_j) + n_k\gamma_j-n_j\gamma_k}{n_0}.
\end{equation}
We arbitrarily select an index $k=1$, and let $j = 2, 3, \dots, M$.
This yields $M-1$ independent equations that constitute the first $M-1$ rows of \eqref{eq:big-matrix}.
The law of total probability implies that,
\begin{equation}
    \sum_{k\in\delta} p_k = 1,
\end{equation}
which is the final row of \eqref{eq:big-matrix}. 
Finally, \eqref{eq:big-matrix} is row-equivalent to the identity matrix; thus it is invertible.
This implies that the Nash Equilibrium exists and is unique.
\end{proof}

\subsection{Heterogeneous Systems}
Next, we consider the case where cost term $c_k^i$ in the utility function \eqref{eq:utility} is heterogeneous.
To generate the general solution, we first first partition the system into \textit{robot groups}.

\begin{definition} \label{def:types}
    A system of $N$ robots has $g\leq N$ \textit{groups}, where robots $i$ and $j$ are in the same group if and only if,
    \begin{equation}
        c_k^i = c_k^j,
    \end{equation}
    for all tasks $k = 1, 2, \dots, M$.
\end{definition}

We use the notion of a group to define useful vector quantities to track the state of the system,
\begin{align}
    \bm{n}_k &= \left[n_k^1,\, n_k^2,\, \dots,\, n_k^g \right]^{\intercal}, \\
    \bm{p}_k &= \left[p_k^1,\, p_k^2,\, \dots,\, p_k^g \right]^{\intercal}, \\
    \bm{p} &= [\bm{p}_1^{\intercal},\, \bm{p}_2^\intercal,\, \dots,\, \bm{p}_M^{\intercal}]^{\intercal},
\end{align}
where $\bm{n}_k$ and $\bm{p}_k$ store the number robots assigned to task $k$ and the probability of assignment to task $k$ for each group, respectively, and $\bm{p}$ stores the probability of all assignments from any group to any task.

Each idle robot in group $g$ has a probability $p_k^g$ of being assigned to task $k$, which is a Bernoulli random variable.
Similar to \eqref{eq:expectation}, the total number of robots that will be assigned to task $k$ follows a Binomial distribution with expectation,
\begin{equation} \label{eq:expectation-hetero}
    \mathbb{E}\big[|\bm{N}_k|\big] = |\bm{n}_k| + \sum_{i=1}^{g} \left(n_0^i\,p_k^i \right) = |\bm{n}_k| + \bm{n}_0\cdot\bm{p}_k,
\end{equation}
where $g$ is the number of groups (Definition \ref{def:types}), $n_0^{i}$ is the number of idle robots in group $i$, $|\cdot|$ sums the elements of a vector, and `$\cdot$' denotes the dot product.

\subsubsection{Idle Task Feasible, Heterogeneous.}

First, we consider the case where some idle robots can remain idle, i.e., $0\in\delta^i$ for some group $i$.

\begin{lemma} \label{lma:hetero-idle}
    For group $i$, if $0\in\delta^i$, i.e., doing nothing is not strictly dominated, then the distribution $\bm{p}_k$ is,
    \begin{equation} \label{eq:lma-htro-idle}
        \bm{n}_0\cdot \bm{p}_k = \gamma_k\Big(1 - s_k - c_k^i - \frac{|\bm{n}{_k}|}{\gamma_k}\Big).
    \end{equation}
\end{lemma}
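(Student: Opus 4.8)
The plan is to mirror the derivation of Lemma~\ref{lma:can-idle}, replacing the homogeneous count $N_k$ with the group-aggregated random variable $|\bm{N}_k|$ and its expectation \eqref{eq:expectation-hetero}. Because $0\in\delta^i$, the mixed-strategy condition (Definition~\ref{def:mix-strat}) demands that group $i$ be indifferent between task $k$ and idling. First I would write down the expected marginal utility of group $i$ between these two actions, i.e., take $\mathbb{E}[\pi^i_{k0}]$ from \eqref{eq:pik0} and set it to zero, now retaining the group-specific cost $c_k^i$ that the homogeneous case discarded.

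The next step is purely substitution: I would insert $\mathbb{E}[|\bm{N}_k|] = |\bm{n}_k| + \bm{n}_0\cdot\bm{p}_k$ from \eqref{eq:expectation-hetero} into the indifference condition. Expanding $(\gamma_k - \mathbb{E}[|\bm{N}_k|])/\gamma_k$, moving the known terms $s_k$, $c_k^i$, and $|\bm{n}_k|/\gamma_k$ to the right, and clearing the factor $\gamma_k$ isolates the unknown dot product $\bm{n}_0\cdot\bm{p}_k$ and reproduces \eqref{eq:lma-htro-idle} directly. This algebra is routine and follows the homogeneous proof line for line, so I would not belabor it.

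The part worth dwelling on is not the arithmetic but a structural observation the derivation exposes: the left-hand side $\bm{n}_0\cdot\bm{p}_k$ is a single scalar that every group shares, whereas the right-hand side depends on the group only through $c_k^i$. Hence if two distinct groups $i$ and $j$ both satisfied \eqref{eq:lma-htro-idle} for the same task $k$, we would be forced to conclude $c_k^i = c_k^j$, contradicting Definition~\ref{def:types}. I would flag this as the main conceptual point, since it shows that at most one group can play a genuinely mixed strategy on task $k$ at any given signal level, with the cost ordering selecting which group that is — precisely the mechanism underlying Remark~\ref{rmk:capable}.
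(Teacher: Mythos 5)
Your derivation is correct and is exactly the paper's proof: set $\mathbb{E}\big[\pi^i_{k0}\big]=0$ with the group cost $c_k^i$ retained, substitute \eqref{eq:expectation-hetero}, and rearrange to isolate $\bm{n}_0\cdot\bm{p}_k$. One caution on your closing observation, which anticipates Lemma~\ref{lma:hetero-idle-elim}: equality $c_k^i=c_k^j$ on the single task $k$ does not contradict Definition~\ref{def:types}, since distinct groups need only differ in cost on \emph{some} task, so the correct conclusion is that every group mixing with idle on task $k$ must share the minimal cost among groups with idle robots --- ties are possible, and ``at most one group can mix on task $k$'' is an overstatement.
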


\begin{proof}
    Substituting \eqref{eq:expectation-hetero} into the expectation of the marginal utility \eqref{eq:pik0} yields,
    \begin{equation}
        1 - \frac{|\bm{n}_k| + \bm{n}_0\cdot\bm{p}_k}{\gamma_k} - s_k - c_k^i = 0,
    \end{equation}
    which completes the proof.
\end{proof}

An important aspect of Lemma \ref{lma:hetero-idle} is that, for a task $k$, it yields $g$ equations for the distribution $\bm{p}_k$.
This gives us information about the supports $\delta^1, \delta^2, \dots, \delta^g$ that satisfy the premise of Lemma \ref{lma:hetero-idle}, which we present in our next result.

\begin{lemma} \label{lma:hetero-idle-elim}
    For task $k$, only the groups that satisfy,
    \begin{equation}
        \arg\min_{i} \left\{ c_k^i ~:~ n_0^i > 0\right\},
    \end{equation}
    satisfy the premise of Lemma \ref{lma:hetero-idle}.
\end{lemma}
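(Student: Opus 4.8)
The plan is to combine a uniqueness observation with a dominance ordering. First I would make the premise of Lemma~\ref{lma:hetero-idle} explicit: for group $i$ to have $0\in\delta^i$ while also placing probability on task $k$, it must be indifferent between the two, so its expected marginal utility vanishes. Substituting \eqref{eq:expectation-hetero} into \eqref{eq:pik0} I would write
\begin{equation*}
    \mathbb{E}\big[\pi^i_{k0}\big] = \mu_k - c_k^i, \qquad \mu_k := 1 - s_k - \frac{|\bm{n}_k| + \bm{n}_0\cdot\bm{p}_k}{\gamma_k},
\end{equation*}
and note that the indifference condition $\mathbb{E}[\pi^i_{k0}]=0$ — which is exactly \eqref{eq:lma-htro-idle} — is equivalent to $c_k^i = \mu_k$. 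The quantity $\mu_k$ is a single scalar built only from the task data $s_k,\gamma_k,|\bm{n}_k|$ and the aggregate assignment $\bm{n}_0\cdot\bm{p}_k$; crucially it does not depend on $i$.

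From this, uniqueness is immediate: since $\mu_k$ is shared by every group, the equality $c_k^i=\mu_k$ can hold only for groups whose cost for task $k$ is this one common value. This already explains why the statement is an $\arg\min$ \emph{set} rather than a single index, since Definition~\ref{def:types} permits several groups to coincide in $c_k^i$ (while differing on some other task) and thus tie.

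The substantive step is to identify $\mu_k$ with the minimum cost over groups retaining idle robots. I would argue through the dominance dichotomy induced by the sign of $\mathbb{E}[\pi^i_{k0}]=\mu_k-c_k^i$. Any group $\ell$ with $c_k^\ell<\mu_k$ has $\mathbb{E}[\pi^\ell_{k0}]>0$, so task $k$ strictly dominates idling for it (Definition~\ref{def:dom-strat}); such a group places zero probability on idling and commits all of its idle robots to task $k$, so it retains none and leaves the active set $\{i: n_0^i>0\}$ tracked by the elimination procedure (Algorithm~\ref{alg:elimination}). Any group $h$ with $c_k^h>\mu_k$ has $\mathbb{E}[\pi^h_{k0}]<0$, so idling strictly dominates task $k$ and it assigns $p_k^h=0$, failing \eqref{eq:lma-htro-idle}. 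Hence, among the groups that still hold idle robots, the only ones meeting the premise are those with $c_k^i=\mu_k$, and since every strictly cheaper group has been eliminated, this common value is exactly $\min_i\{c_k^i : n_0^i>0\}$.

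The step I expect to be the main obstacle is the commitment argument in the previous paragraph, because it couples the group-wise best responses through the single aggregate $\bm{n}_0\cdot\bm{p}_k$ that defines $\mu_k$. Making it airtight requires showing the configuration is self-consistent at equilibrium: the strict decrease of the marginal utility in the number assigned (Assumption~\ref{smp:monotonicity}) guarantees that, once the cheaper groups have fully committed, the residual assignment carried by the minimum-cost idle group is precisely what drives $\mu_k$ down to that group's cost, so no strictly cheaper group with idle robots can persist. I would also confirm the feasibility bound $p_k^i\in[0,1]$ throughout, mirroring the signal-range discussion around \eqref{eq:signal-range}.
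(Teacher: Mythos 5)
Your proof is correct and follows essentially the same route as the paper: both arguments equate the right-hand sides of \eqref{eq:lma-htro-idle} across all premise-satisfying groups, which share the single scalar $\bm{n}_0\cdot\bm{p}_k$ (your $\mu_k$), to conclude that every such group must have one common cost $c_k^i$, explaining the $\arg\min$ \emph{set}. Your dominance dichotomy on the sign of $\mathbb{E}[\pi^i_{k0}]=\mu_k-c_k^i$ — strictly cheaper groups have idling strictly dominated and so fail the premise and shed their idle robots, strictly more expensive groups set $p_k^i=0$ — makes explicit the identification of that common value with $\min_i\{c_k^i : n_0^i>0\}$, a step the paper's proof leaves implicit in its closing ``which implies Lemma~\ref{lma:hetero-idle-elim}.''
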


\begin{proof}
    Let $\mathcal{G} \subseteq \{i = 1, \dots , g : n_0^i > 0\}$ be a set of groups that that satisfy the premise of Lemma \ref{lma:hetero-idle}.
    First, any group $i$ such that $n_0^i = 0$ cannot satisfy Lemma \ref{lma:hetero-idle}, as $p_k^i = 0$ for all $k = 1, 2, \dots, M$ under Assumption \ref{smp:hysteresis}.
    Then, \eqref{eq:lma-htro-idle} implies,
    \begin{align*}
        \bm{n}_0\cdot\bm{p}_k
        = \gamma_k\left(1 - s_k - c_k^i - \frac{|\bm{n}_k|}{\gamma_k} \right) \quad \forall i\in\mathcal{G},
    \end{align*}
    which implies,
    \begin{equation}
        c_k^i = c_k^j \quad \forall i,j\in\mathcal{G},
    \end{equation}
    which implies Lemma \ref{lma:hetero-idle-elim}.
\end{proof}

Thus, Lemma \ref{lma:hetero-idle-elim} implies a straightforward solution: i) for each task, eliminate all groups with idle robots except those that minimize the cost, and ii) solve the remaining linear system of equations.

\subsubsection{Idle Task Infeasible, Heterogeneous.}

Finally, we derive the Nash equilibrium for the general heterogeneous robot assignment problem when remaining idle is infeasible.
\begin{lemma} \label{lma:hetero}
    When remaining idle is infeasible, the Nash equilibrium for task assignment has the form,
    \begin{equation*}
        \begin{bmatrix}
             \Lambda  \\
             \Lambda \\
             \vdots   \\
             \Lambda \\
            I_{g\times g}
            \otimes I_{N\times N}
        \end{bmatrix}
       \bm{p}
       =
       \begin{bmatrix}
           b^1 \\ b^2 \\ \vdots \\ b^g \\ \bm{1}^g
       \end{bmatrix},
\end{equation*}
where $\bm{1}^M$ is an $M$-length vector of ones, $\otimes$ is the Kronecker product, $I_{N\times N}$ is the $N\times N$ identity matrix,
\begin{equation} \label{eq:Lambda}
 \Lambda = 
    \begin{bmatrix}
        \gamma_2\bm{n}_0^{\intercal} & -\gamma_1\bm{n}_0^{\intercal} & 0 & 0 & \dots & 0 \\
        \gamma_3\bm{n}_0^{\intercal} & 0 & -\bm{n}_0^{\intercal} & 0 & \dots & 0 \\
        \gamma_4\bm{n}_0^{\intercal} & 0 & 0 & -\bm{n}_0^{\intercal} & 0 & \dots \\
        \vdots & 0 & 0 & 0 & \ddots & 0 \\
        \gamma_M\bm{n}_0^{\intercal} & 0 & 0 & 0 & \dots & -\bm{n}_0^{\intercal} 
    \end{bmatrix},
\end{equation}
is an $(M-1)\times(N\cdot M)$ matrix,
\begin{equation} \label{eq:b}
b^i = 
\begin{bmatrix}
\gamma_1\gamma_2\left((s_1 + c_1^i) - (s_2 + c_2^i) + \frac{|\bm{n}_1|}{\gamma_1} - \frac{|\bm{n}_2|}{\gamma_2}\right) \\
\gamma_1\gamma_3\left((s_1 + c_1^i) - (s_3 + c_3^i) + \frac{|\bm{n}_1|}{\gamma_1} - \frac{|\bm{n}_3|}{\gamma_3}\right) \\
\vdots \\
\gamma_1\gamma_M\left((s_1 + c_1^i) - (s_M + c_M^i) + \frac{|\bm{n}_1|}{\gamma_1} - \frac{|\bm{n}_M|}{\gamma_M}\right)
\end{bmatrix},
\end{equation}
and $|\bm{n}_k|$ is the sum of the elements of $\bm{n}_k$.
\end{lemma}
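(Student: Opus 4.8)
The plan is to mirror the derivation of Lemma \ref{lma:cant-stop}, lifting each scalar step to the group-indexed vector setting. First I would write the mixed-strategy indifference condition for a robot in group $i$: since remaining idle is infeasible, every task in the support must yield equal expected marginal utility, so I set $\mathbb{E}[\pi^i_{1j}] = 0$ for the arbitrarily chosen reference task $k=1$ and every other task $j = 2, \dots, M$, using \eqref{eq:pikj}. This produces $M-1$ indifference equations for each group.

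Second, I would substitute the heterogeneous expectation \eqref{eq:expectation-hetero}, namely $\mathbb{E}[|\bm{N}_k|] = |\bm{n}_k| + \bm{n}_0 \cdot \bm{p}_k$, into each indifference condition and clear denominators by multiplying through by $\gamma_1\gamma_j$. The probability-dependent terms then collect into $\gamma_j(\bm{n}_0 \cdot \bm{p}_1) - \gamma_1(\bm{n}_0 \cdot \bm{p}_j)$, whose coefficients are (up to the overall sign convention of \eqref{eq:b}) exactly row $j-1$ of $\Lambda$ in \eqref{eq:Lambda} acting on the stacked vector $\bm{p}$, while the signal-, cost-, and count-dependent terms collect into the constant vector $b^i$ of \eqref{eq:b}. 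The key structural observation is that the coefficient block $\Lambda$ is \emph{independent} of the group index $i$, since the cost $c_k^i$ enters only through $b^i$; this is precisely why the left-hand matrix is $\Lambda$ stacked $g$ times. I would then append the normalization constraints: because each group is an independent population whose members must be assigned somewhere, the law of total probability gives $\sum_{k} \bm{p}_k = \bm{1}^g$, one equation per group, which in the task-major ordering of $\bm{p}$ is the bottom Kronecker-structured selection block with right-hand side $\bm{1}^g$. Counting rows, the $g(M-1)$ indifference equations plus the $g$ normalization equations match the $gM$ unknowns in $\bm{p}$, so the system is square.

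The hard part will be establishing that this square system actually admits a unique solution, because naively the $g$ identical copies of $\Lambda$ applied to the common vector $\bm{p}$ demand $\Lambda\bm{p} = b^1 = \dots = b^g$, which is inconsistent as soon as the costs $c_k^i$ differ across groups. Resolving this is exactly where the support-elimination argument of Lemma \ref{lma:hetero-idle-elim} must re-enter: for each task, only the cost-minimizing groups can be simultaneously indifferent, so the supports $\delta^i$ genuinely differ across groups, and the repeated $\Lambda$ is really a bookkeeping device whose \emph{active} rows differ per group. I would therefore complete the proof by first invoking Lemma \ref{lma:hetero-idle-elim} to prune, for each task, the groups that cannot participate, and then showing that the resulting reduced coefficient matrix is row-equivalent to the identity, exactly as in Lemma \ref{lma:cant-stop}, which delivers existence and uniqueness of $\bm{p}$.
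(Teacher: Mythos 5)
Your derivation matches the paper's proof of Lemma \ref{lma:hetero} essentially step for step: the indifference conditions $\mathbb{E}[\pi^i_{k1}]=0$ against the arbitrarily chosen reference task $1$, substitution of \eqref{eq:expectation-hetero}, multiplication by $\gamma_1\gamma_k$ to produce the rows of $\Lambda$ and the constants $b^i$ (your aside about the sign convention in \eqref{eq:b} is warranted, as is your use of $g$ groups where the paper's proof loosely writes $N$), and the per-group law of total probability supplying the final identity-block rows. The overdetermination you flag in your last paragraph ($\Lambda\bm{p}=b^1=\dots=b^g$ being inconsistent when costs differ) is a genuine issue but lies outside this lemma's intended scope: the paper states the result only under the premise that the supports $\delta^1,\dots,\delta^g$ are suitably chosen and resolves precisely that inconsistency via row reduction and elimination of dominated groups in Lemma \ref{lma:reduction}, exactly as you propose.
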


\begin{proof}
    The marginal utility for robot $i$ to select task $k$ over task $j$ is,
    \begin{equation} \label{eq:marginal-utility-hetero}
        \pi_{kj}^i = \left(\frac{|\bm{N}_j|}{\gamma_j} - \frac{|\bm{N}_k|}{\gamma_k}\right) + (s_j + c_j^i) - (s_k + c_k^i).
    \end{equation}
    Setting the expectation equal to zero yields,
    \begin{align}
        \mathbb{E}\big[\pi_{kj}^i\big] =& 
        \frac{|\bm{n}_j| + \bm{n}_0\cdot\bm{p}_j}{\gamma_j} - \frac{|\bm{n}_k| + \bm{n}_0\cdot\bm{p}_k}{\gamma_k}
        \notag \\
        &+ (s_j + c_j^i) - (s_k + c_k^i) = 0,
    \end{align}
    which implies,
    \begin{equation*}
        \bm{n}_0\cdot\left(\frac{\bm{p}_j}{\gamma_j} - \frac{\bm{p}_k}{\gamma_k}\right)  = (s_k + c_k^i) - (s_j + c_j^i) + \left(\frac{|\bm{n}_k|}{\gamma_k} - \frac{|\bm{n}_j|}{\gamma_j}\right).
    \end{equation*}
    We arbitrarily select $j=1$, which yields $(M-1)\cdot N$ equations for $k = 2, 3, \dots, M$  and $i = 1, 2, \dots N$,
    \begin{equation*}
        \bm{n}_0\cdot\left(\frac{\bm{p}_1}{\gamma_1} - \frac{\bm{p}_k}{\gamma_k}\right)  = (s_k + c_k^i) - (s_1 + c_1^i) + \left(\frac{|\bm{n}_k|}{\gamma_k} - \frac{|\bm{n}_1|}{\gamma_1}\right),
    \end{equation*}
    where multiplying by $\gamma_1\gamma_k$ yields makes the left and right-hand side are equal to $\Lambda$ and $b_k$, respectively.
    The remaining $N$ rows come from the definition of a probability distribution, i.e.,
    \begin{equation}
        \sum_{k\in\mathcal{S}} p_k^i = 1 \quad \forall i=\{1, 2, \dots, g\},
    \end{equation}
    which completes the proof.
\end{proof}

Lemma \ref{lma:hetero} yields the mixed Nash equilibrium for each task under the premise that the supports $\delta^1, \delta^2, \dots, \delta^g$ are suitably chosen.
Like Lemma \ref{lma:hetero-idle}, we can derive information about feasible supports for each group by ensuring the matrix in Lemma \ref{lma:hetero} is full-rank.

\begin{lemma} \label{lma:reduction}
    The collection of supports $\delta^1, \delta^2, \dots, \delta^g$ that satisfy the premise of Lemma \ref{lma:hetero} yield a mixed Nash equilibrium that satisfies,
    \begin{equation}
        \begin{bmatrix}
            \Lambda \\
            A
        \end{bmatrix}
        \bm{p}
        =
        \begin{bmatrix}
            \bm{b} \\
            \bm{1}
        \end{bmatrix},
    \end{equation}
    where $A$ is an appropriately sized binary matrix that contains a subset of the $I_{g\times g}\otimes I_{N\times N}$ terms in Lemma \ref{lma:hetero}, and $\bm{1}$ is an appropriately sized vector of ones.
\end{lemma}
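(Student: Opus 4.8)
The plan is to derive the reduced system directly from the full system of Lemma~\ref{lma:hetero} by imposing the support constraints intrinsic to a mixed strategy. First I would invoke Definition~\ref{def:mix-strat}: in any mixed equilibrium, $p_k^i = 0$ for every group $i$ and every task $k\notin\delta^i$. Fixing these coordinates at zero deletes the corresponding entries of $\bm{p}$ and the columns they index, so the surviving unknowns are exactly the probabilities $p_k^i$ with $k\in\delta^i$. It then remains to show that the surviving rows collapse into the two advertised blocks: an indifference block $\Lambda\bm{p}=\bm{b}$ and a normalization block $A\bm{p}=\bm{1}$ whose matrix $A$ retains only the identity entries indexing in-support assignments.

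The key observation is that the indifference equations of Lemma~\ref{lma:hetero} share a common left-hand side across groups but carry group-dependent right-hand sides $b^i$, differing only through the cost terms $c_k^i$. If two distinct groups $i\neq j$ were both indifferent between the same pair of tasks --- say the reference task and task $k$ --- the identical left-hand row would have to equal both $b^i$ and $b^j$, forcing the cost alignment $c_k^i-c_1^i = c_k^j-c_1^j$. This is exactly the mechanism of Lemma~\ref{lma:hetero-idle-elim}: for generic costs such overlaps are infeasible, so a collection of supports satisfies the premise of Lemma~\ref{lma:hetero} only when the retained indifference equations are mutually consistent. Restricting to group $i$'s own support $\delta^i$, its indifference equations reduce to $|\delta^i|-1$ rows (between a reference task in $\delta^i$ and each other in-support task), and these assemble into the block $\Lambda\bm{p}=\bm{b}$ with $\bm{b}$ collecting the surviving right-hand sides. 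In parallel, each normalization row $\sum_{k\in\mathcal{S}}p_k^i=1$ becomes $\sum_{k\in\delta^i}p_k^i=1$ once the zero coordinates are removed; stacking these over the participating groups yields the binary matrix $A$, which is by construction a subset of the $I_{g\times g}\otimes I_{N\times N}$ terms.

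To finish, I would verify that the reduced system is square and full rank, mirroring Lemma~\ref{lma:cant-stop}. Group $i$ contributes $|\delta^i|$ unknowns, $|\delta^i|-1$ indifference rows, and one normalization row, so summing over participating groups the system has $\sum_i|\delta^i|$ equations and unknowns. Because the normalization rows are linearly independent across groups and independent of the indifference rows, the combined matrix is row-equivalent to the identity --- exactly as in the homogeneous case --- hence invertible, and the mixed Nash equilibrium exists and is unique. Here I would also note that the choice of reference task is immaterial: pairwise indifference is transitive, so any spanning collection of $|\delta^i|-1$ conditions on $\delta^i$ suffices, and one simply reselects a reference inside $\delta^i$ whenever task~$1$ is absent.

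The step I expect to be the main obstacle is establishing the consistency of the retained indifference rows together with the full-rank conclusion. Because the left-hand sides couple all groups through $\bm{n}_0$, one must show simultaneously that no two participating groups impose conflicting indifference conditions and that the pruned equations neither over- nor under-determine the in-support probabilities. Handling this coupling, together with the reference-task bookkeeping when the supports $\delta^i$ do not share a common task, is where the care is needed.
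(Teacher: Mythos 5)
There is a genuine gap, and it sits exactly where you flag it: the ``main obstacle'' you defer is the entire content of the paper's proof. The paper proves Lemma \ref{lma:reduction} by a case analysis on the rows of the system in Lemma \ref{lma:hetero}: if fewer than $M-1$ rows are row-equivalent to the form of $\Lambda$, some in-support task has zero assignment probability, contradicting the premise; if $Q>M-1$ rows have that form, the excess rows are linearly dependent (since, as you correctly observe, every group's indifference row between tasks $1$ and $k$ has the identical left-hand side $\bm{n}_0\cdot(\bm{p}_1/\gamma_1 - \bm{p}_k/\gamma_k)$), and each dependent pair is resolved by a dichotomy: if $c_r^u = c_r^v$, indifference (Definition \ref{def:mix-strat}) forces $p_r^u = p_r^v$, so the two rows \emph{merge} into a single row with pooled idle count $n_0^u + n_0^v$, preserving the diagonal structure of $\Lambda$; if $c_r^u < c_r^v$, group $v$ is strictly dominated, its probability is set to zero, and its row is deleted. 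Your proposal instead dismisses the equal-cost case as ``infeasible for generic costs.'' That discards precisely the case the lemma exists to handle: the merge step is what delivers the advertised reduction of the heterogeneous problem to a pooled homogeneous one (the paper lists Lemma \ref{lma:reduction} as exactly this contribution), and it is what makes $A$ a \emph{proper} subset of the identity rows rather than the full $I_{g\times g}\otimes I_{N\times N}$ block. A genericity assumption is not available here --- the statement must hold for all cost configurations, and by Definition \ref{def:types} two distinct groups may well share costs on some tasks while differing on others.

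Relatedly, your squareness/full-rank argument does not go through as stated. Counting $|\delta^i|-1$ indifference rows plus one normalization row per group overstates the rank whenever supports overlap: overlapping groups contribute duplicate left-hand sides, so $\sum_i(|\delta^i|-1)$ rows are not independent, and the system is either inconsistent (distinct right-hand sides, resolved by domination/elimination) or rank-deficient in your count (identical right-hand sides, resolved by merging and pooling $n_0^u+n_0^v$). Squareness and invertibility only emerge \emph{after} the merge-or-eliminate step, which is why the paper performs that step first and only then inherits uniqueness in the style of Lemma \ref{lma:cant-stop}. Your identification of the forced cost alignment $c_k^i - c_1^i = c_k^j - c_1^j$ is the right diagnostic, but the proof requires turning that diagnostic into the two concrete row operations above rather than ruling the coincidence out.
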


\begin{proof}
    We prove Lemma \ref{lma:reduction} in two steps.
    First, let the first $(M-1)$ rows not be row-equivalent to the form of $\Lambda$.
    This implies that at least one task $k = 1, 2,$ in the support has a zero probability of assignment, which contradicts our premise.
    
    Next, let the first $ Q > (M-1)$ rows have the form of $\Lambda$.
    This implies that there are $Q-(M-1)$ linearly dependent rows.
    If two rows share the same cost, i.e., $c_{r}^u = c_{r}^v$,
    then by the definition of a mixed strategy (Definition \ref{def:mix-strat}),
    \begin{equation}
        \frac{\gamma_r - \mathbb{E}[n_r]}{\gamma_r} - s_r - c_r^u
        = \frac{\gamma_r - \mathbb{E}[n_r]}{\gamma_r} - s_r - c_r^v,
    \end{equation}
    which implies that $p_r^u = p_r^v$.
    Thus, the number of robots available for assignment to task $r$ is $n_0^u + n_0^v$, which we can write as a single line while maintaining the diagonal structure of $\Lambda$.
    If two linearly dependent rows do not share the same cost, i.e., $c_{r}^u < c_{r}^v$, then group $v$ is strictly dominated by $u$, and $p_r^u = 0$.
    This implies that we must remove the row corresponding to $p_r^u$, which completes the proof.
\end{proof}

The proof of Lemma \ref{lma:reduction} has a structure that immediately yields an interesting property of our assignment algorithm, which we present next.

\begin{remark} \label{rmk:capable}
    The most capable idle robots are \textbf{always} assigned to a task before less capable robots.
\end{remark}

\begin{proof}
    In the homogeneous case, all robots are equally capable and Remark \ref{rmk:capable} holds trivially.
    For the heterogeneous case, this follows directly from Lemma \ref{lma:hetero-idle-elim} for the case where remaining idle is feasible, otherwise it follows from the reduction of strictly dominated robots in Lemma \ref{lma:reduction}.
\end{proof}

Thus, our framework generates the \textbf{unique} mixed Nash equilibrium that optimizes the allocation of robots to goals, and robots are allocated in a way where more capable robots are \textit{always} allocated first.
However, finding the mixed Nash equilibrium requires knowledge of the appropriate supports $\delta^1, \delta^2, \dots, $ for each group.
We present an efficient solution to generate the support using \textit{iterated elimination of dominant strategies} \cite{Morris2001GlobalApplications}, which we present in Algorithm \ref{alg:elimination} and Theorem \ref{thm:algorithm}.

\begin{theorem} \label{thm:algorithm}
    The optimal allocation of robots to tasks is achieved by Algorithm \ref{alg:elimination}.
\end{theorem}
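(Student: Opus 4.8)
The plan is to show that Algorithm~\ref{alg:elimination} is an implementation of iterated elimination of strictly dominated strategies~\cite{Morris2001GlobalApplications}, a procedure that provably preserves the set of Nash equilibria, and then to invoke the uniqueness established in Lemmas~\ref{lma:cant-stop},~\ref{lma:hetero}, and~\ref{lma:reduction} to conclude that the surviving strategy profile is precisely the unique mixed Nash equilibrium---that is, the optimal allocation. First I would establish that every elimination the algorithm performs removes a genuinely \emph{strictly} dominated strategy, so that no Nash equilibrium is ever discarded. Then I would argue termination and verify that the terminal profile is a valid, self-consistent equilibrium.

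For the correctness of each elimination step, I would appeal directly to the earlier results. A task $k$ whose signal $s_k$ exceeds the upper bound of the feasibility range~\eqref{eq:signal-range} is strictly dominated by idling, so $p_k = 0$ and task $k$ leaves the support; this is exactly the case flagged by a negative value of~\eqref{eq:lma1} in Lemma~\ref{lma:can-idle}. Symmetrically, when $\sum_k p_k > 1$ the idle action itself becomes strictly dominated, and the algorithm switches from the idle-feasible solution to the matrix form of Lemma~\ref{lma:cant-stop} (homogeneous) or Lemma~\ref{lma:hetero} (heterogeneous). In the heterogeneous setting, Lemma~\ref{lma:hetero-idle-elim} together with the reduction argument of Lemma~\ref{lma:reduction} guarantees that, for each task, only the cost-minimizing groups with idle robots survive, so every higher-cost group the algorithm eliminates is strictly dominated by a cheaper one---this is the content of Remark~\ref{rmk:capable}.

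Next I would prove termination and fixed-point correctness. Since the strategy set $\mathcal{S}$ and the group set are finite and each round of the algorithm strictly shrinks at least one support (or else halts), the procedure terminates after finitely many steps. At termination no strategy is strictly dominated, so every surviving probability lies in $[0,1]$, the supports are mutually consistent, and the linear systems of Lemmas~\ref{lma:cant-stop},~\ref{lma:hetero}, or~\ref{lma:reduction}---whose coefficient matrices are full-rank, hence invertible---produce a well-defined profile $\bm{p}$. By the defining property of a mixed strategy (Definition~\ref{def:mix-strat}) this profile equalizes expected utilities across each support, so it is a mixed Nash equilibrium; uniqueness of that equilibrium then forces it to coincide with the algorithm's output.

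The main obstacle I anticipate is closing the consistency gap between elimination and feasibility: I must rule out cyclic behavior in which removing a dominated task or group re-renders a previously feasible strategy infeasible, potentially looping indefinitely. The resolution is the order-independence of iterated elimination of \emph{strictly} dominated strategies, which holds because strict dominance is preserved under the removal of other strategies; combined with the strict monotonicity of the utility in Assumption~\ref{smp:monotonicity}, every dominance relation the algorithm exploits is strict, so no surviving strategy can later be re-eliminated and the support shrinks monotonically to a unique limit independent of the elimination order. Verifying this monotone-shrinkage property carefully---and confirming that the correct branch (idle-feasible versus idle-infeasible) is always selected at each stage---is where the bulk of the rigor will lie.
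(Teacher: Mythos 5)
Your proposal is correct and follows essentially the same route as the paper's proof: split on whether idling is feasible, invoke Lemma~\ref{lma:can-idle} in the first case and Lemmas~\ref{lma:hetero} and~\ref{lma:reduction} in the second, and justify the algorithm's pruning steps as elimination of strictly dominated strategies. In fact your version is considerably more rigorous than the paper's two-sentence argument---the termination, order-independence, and support-consistency points you flag are genuine gaps the paper leaves unaddressed---so the extra work you propose would strengthen, not merely replicate, the published proof.
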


\begin{proof}
    Theorem \ref{thm:algorithm} holds from Lemma \ref{lma:can-idle} in the case that idling is a feasible action, i.e., $0\in\delta$.
    Otherwise, it holds by Lemmas \ref{lma:hetero} and \ref{lma:reduction}, by eliminating the rows corresponding to strictly dominated assignments.
\end{proof}

\begin{algorithm}[h] 
    \Input{$\bm{s}$, $\{\bm{c}^1,\dots, \bm{c}^{g}\}$, $\{\bm{n}_0, \dots, \bm{n}_M\}$}
    \Output{$\bm{p}$}
    remove all groups $i=1,2,\dots, g$ where $n_0^i = 0$\;
    initialize all elements $\bm{p}\to\bm{0}$\;
    \For{$i=1,2,\dots,g$}{
        $\delta^i \gets \arg\min_{j\in\mathcal{S}}\{c_j^i, n_0^i\}$ (initialize support)\;
        \If{$\delta^i \neq \emptyset$}{
            $p_0^i = -1$ (Initialize to invalid value)\;
        }
    }
    \tcp{Assign Tasks where Idle is Feasible}
    \Do{$\{i\in\mathcal{I} ~:~ p_0^i < 0\}\neq \emptyset$}
    {
        $\bm{p}^i \gets \max\left\{0,\, \eqref{eq:lma-htro-idle}\right\}$ (assignment with idle)\; 
        $\delta^i = \delta^i \setminus\{k : p_k^i \leq 0\}$ (elimination)\;
        $p_0^i \gets \left(1 - \sum_{k\in\delta^i}\, p_k\right)$\;
    }
    \tcp{Assign Tasks where Idle is Infeasible}
    \Do{$\exists\, i ~:~\sum_k p_k^i \neq 1$}{
        \For{$i\in\mathcal{I}$}{
            $\delta^i = \delta^i \setminus \{k : p_k^i\leq 0\}$ (elimination)\;
            $\bm{b}^i \gets$ \eqref{eq:b}\;
            }
        $\Lambda \gets$ \eqref{eq:Lambda}\;
        $\bm{p} \gets \max\left\{\Lambda^{-1}\bm{b},\, \bm{0}\right\}$ (Assignment, Lemma \ref{lma:hetero-idle-elim})\;
    }
    \textbf{return:} $\bm{p}$;
    \caption{Iterated elimination of dominant strategies for the multi-robot task allocation problem.}
    \label{alg:elimination}
\end{algorithm}

\begin{remark}
        Algorithm 1 finds the optimal assignment with $O(M\cdot g)$ computational complexity.
\end{remark}

\begin{proof}
By Theorem \ref{thm:algorithm}, Algorithm \ref{alg:elimination} finds the mixed Nash Equilibrium.
It achieves this in 3 steps, (i) iterate through $N$ robots to find the idle robots with the minimum cost for each task, (ii) perform iterated elimination in the case where the idle task is feasible, and (iii) perform iterated elimination when the idle task is infeasible.
The first part has $O(g)$ complexity as it creates the support $\delta^i$ that assigns only the most capable robots to each task.
The second part has $O(M)$ complexity, since at least one task is removed each iteration and there is only one robot group per task.
The third part has $O(M\cdot g)$ complexity; at least one task is eliminated in the while loop at each iteration, and the for loop loops through all $g$ groups at most.
This results in a final complexity of $O(g+M+M\cdot g)$, which simplifies to $O(M\cdot g)$ and completes the proof.
\end{proof}

Thus, with Algorithm \ref{alg:elimination}, we present a computationally efficient method to allocate robots to dynamic tasks based on the weighting factor $\lambda^k$ for each task, the cost $c_k^i$ for each robot-task pair, and the current state of each task $s_k$.
By efficiently selecting the parameters $\lambda_k$ and $c_k^i$ in the objective function, one can achieve an efficiently allocate where the robots idle at a central location while monitoring the completion of dynamic tasks.
This makes it straightforward for robots to be added to or remove from the system, as this only changes the vector $\bm{n}$ in \eqref{eq:big-matrix}.
Similarly, if a new task is added or the importance of an existing task changes quickly, this directly affects the allocation of robots through \eqref{eq:big-matrix}.
Our working Assumptions \ref{smp:information} and \ref{smp:hysteresis} suggest that our approach works best in systems with some notion of a centralized depot--where idle robots monitor the values of $\bm{n}$ and $\bm{s}$--and that robots assigned to tasks return to periodically.
We present two possible scenarios in the following section.

\section{Simulation Results} \label{sec:sim}

To validate the performance of our Algorithm \ref{alg:elimination}, we performed a collection of Matlab simulations.
In Section \ref{sec:colony}, we present a \textit{colony maintenance problem}, where robots must gather energy sources in the environment while also moving cargo to the colony that is periodically delivered.
We also remove 50\% of the robots halfway through the simulation to demonstrate our algorithm's robustness to the addition and removal of robots to the system.

In Section \ref{sec:colony}, we present a \textit{persistent monitoring problem}, where $N$ robots must periodically visit a collection of $M > N$ nodes to collect information that accumulates over time \cite{hall2023bilevel}.
We select a linear mapping the state of each node (information quantity) to the corresponding signal read by each robot, and we set the cost $c_k^i$ equal to the distance between robot $i$ and node $k$.
This demonstrates how our system performs for the most heterogeneous system, i.e., the group size is $1$ robot.

In both simulations, \textbf{collision avoidance} is guaranteed for all robots using a cooperative control barrier function (CBF) \cite{Ames2019ControlApplications}.
Namely, each task produces a reference velocity $\bm{v}_{ref}$, which steers the robot toward their goal at their maximum speed.
We then generate the actual velocity signal by solving Problem \ref{prb:cbf} at each time step.

\begin{problem} \label{prb:cbf}
For each of robot at each time step $t_k$, find the control input for each robot $i$ that optimizes,
    \begin{align*}
        \min_{\bm{v}_i(t_k)} & \frac{1}{2}||\bm{v}_i - \bm{v}_{ref}||^2 \\
        \text{subject to: }&\\
        ||\bm{v}_i(t_k)|| &\leq v_{\max}, \\
        \bm{p}_i^{\intercal}\bm{v}_i &+ (||\bm{p}_i||^2 - R_o^2) \leq 0, \\
        (\bm{p}_i-\bm{p}_j)^{\intercal}\bm{v}_i - \frac{1}{2}v_{\max}||\bm{p}_i - \bm{p}_j|| &+ (||\bm{p}_i - p_j||^2 - r^2) \leq 0,
    \end{align*}
    where the robots have single-integrator dynamics,  the first constraint enforces the control bounds, the second constraint keeps the robot within a distance $R_0$ of the center of the domain, and the third equation guarantees collision avoidance between robots.
\end{problem}

\subsection{Colony Maintenance} \label{sec:colony}

For the first example, we considered a collection of $N=12$ robots in an autonomous colony and $K=2$ tasks, energy harvesting and cargo transporting.
Furthermore, we randomly remove half of the robots approximately halfway through the simulation to demonstrate our approach's resilience to robot failure.
We describe the geometry of the simulation next, then discuss each of the behaviors corresponding to the two tasks.
The parameters used for this example are presented in Table \ref{tab:params}.

\begin{table}[ht]
    \centering
    \caption{Robot, environment, and task parameters use for the Matlab simulation.}
    \label{tab:params}
    \begin{tabular}{cccccccc}
        $h$ & $r$ & $R_o$ & $R_i$ & $E_{source}$ & $E_{drain}$ & $\gamma_1$ & $\gamma_2$ \\ \toprule
        5 m & 0.25 m & 30 m & 5 m & 4 J & 0.1 J s & 12 & 7.2
    \end{tabular}
\end{table}

The domain for the \textit{colony maintenance} task is an annulus with outer radius $R_0$ and inner radius $R_i$ centered at the origin.
The inner disk represents the colony, where robots idle, recharge, and deliver energy sources and cargo.
The energy sources and cargo are placed within the domain, and we consider them to be delivered when a robot carrying them reaches a distance $R_i$ of the origin.
To assign robots to tasks (idle, energy harvesting, cargo transporting), we use Algorithm \ref{alg:elimination} at each time step to determine the probabilities $p_0, p_1, p_2$.
Finally, each robot calculates a uniformly distributed random number $P^i\in[0, 1]$ and determines its assigned task by comparing against the distribution, i.e.,
\begin{equation}
    \begin{aligned}
        0 &\leq P^i \leq p_0 &&\implies \text{idle}, \\
        p_0 &\leq P^i \leq p_0 + p_1 &&\implies \text{energy harvesting}, \\
        p_0 + p_1 &\leq P^i \leq 1 &&\implies \text{cargo transporting}.
    \end{aligned}
\end{equation}
To complete either task, the robot must return to the colony with either a piece of cargo or an energy source.
When this occurs, the robot switches back to the idle task and repeats the assignment process outlined above.
Next, we discuss the dynamics of each task and the corresponding signals $s_1(t), s_2(t)$.

\textbf{Energy Harvesting} involves the robot performing a random walk to search the domain for energy sources.
We model the energy dynamics of the colony as a linear function,
\begin{equation} \label{eq:energy-harvesting-dynamics}
\begin{aligned}
    \dot{E}_c(t) =& -E_{drain}
    + \mathbbm{1}_E E_{source} - \mathbbm{1}_C E_{charge},
\end{aligned}
\end{equation}
where $E_{drain}$ is a constant leakage rate, $\mathbbm{1}_E$ is an indicator variable that is $1$ when an energy source is delivered and $0$ otherwise, $E_{source}$ is the energy contents of the energy sources in the environment, $\mathbbm{1}_C$ is an indicator variable that is $1$ when a robot is recharging and $0$ otherwise, and $E_{charge}$ is the energy cost of recharging a robot.
Each robot also consumes energy at a rate of,
\begin{equation} \label{eq:robot-energy}
    \dot{E}_{robot} = -0.1\,E_{drain} \frac{||\bm{v}||}{v_{\max}} + \mathbbm{1}_c E_{charge},
\end{equation}
which is an order of magnitude less energy than the leakage rate of the colony.
The indicator variable $\mathbbm{1}_C E_{charge}$ sets $E_{robot}$ back to zero, which restores any energy used by the robot's motion.
Each of the parameters is given in Table \ref{tab:params}, and finally we define the task signal as,
\begin{equation}
    s_1(t) = \frac{E(t)}{E_{\max}},
\end{equation}
which is a linear mapping to the normalized energy of the colony.
Finally, we initialize $E_c(t=0) = \frac{1}{2}E_{\max}$ to avoid an initial transient where all robots are idle because $s_1(t) \approx 1$.

To perform the task, we use a random walk similar to \cite{krieger2000call} as an illustrative example.
First, we assume each robot has a finite sensing distance $h$.
To implement the random walk, the robot selects a random heading angle in $[0, 2\pi]$, and generates a target position at a distance $h$ to move towards.
If the target position is outside of the  domain, we project the target position onto the domain boundary.
After reaching the target position, the robot randomly generates another heading angle, which yields a new target position.
This process is repeated until the robot comes within a distance $h$ of an energy source in the domain.
Once this occurs, the robot moves to the position of the energy source, then follows a straight-line trajectory back to the origin.
Once the robot reaches the colony (i.e., a distance $R_i$ from the origin), the colony energy is increased by a constant $E_{source}$, the robot switches to an idle task, then immediately updates its assignment.
If the robot re-assigns itself to the energy harvesting task again, it attempts to return to the location of the energy source it previously retrieved, with some Gaussian noise scaled by the distance traveled.
Similar to \cite{Krieger2000}, this simulates the impact of stochasticity in the robot's sensors and actuators.

\textbf{Cargo Transporting} involves the robot transferring cargo from a known depot location to the colony.
The amount of cargo $c(t)$ is a discrete number between $0$ and $c_{\max}$.
To deliver cargo, robots travel in a straight line toward the known depot location, wait for a small time delay, then travel back toward the origin on a straight-line trajectory.
This simulates the robot automatically acquiring and delivering supplies to the colony, e.g., scientific equipment, from a known drop-off location.
We define the cargo signal as a linear mapping,
\begin{equation}
    s_2(t) = 1 - \frac{c(t)}{c_{\max}},
\end{equation}
where $c_{\max} = 10$ is the cargo capacity of the depot.
For this task, robots travel in a straight line to the depot location, wait for a small time delay, then return along a straight-line path to the colony.
This simulates the robots autonomously delivering supplies to the colony, e.g., scientific equipment or maintenance supplies.

In our simulation, cargo is delivered at two different time instants,
\begin{align*}
    t_1 = 0.25\cdot t_f &= 120\text{ seconds}, \\
    t_2 = 0.375\cdot t_f &= 225\text{ seconds},
\end{align*}
where $t_f = 600$ s is the final time step of the simulation.
For each delivery, we set the cargo quantity $c(t_1) = c(t_2) = c_{\max} = 10$ units.
Both times the cargo is located in a cargo depot located at $(20, 0)$, which just over half the distance from the colony to the domain boundary (Table \ref{tab:params}).

\begin{figure*}[t]
    \begin{center}
    \hfill
    \includegraphics[width=0.27\linewidth]{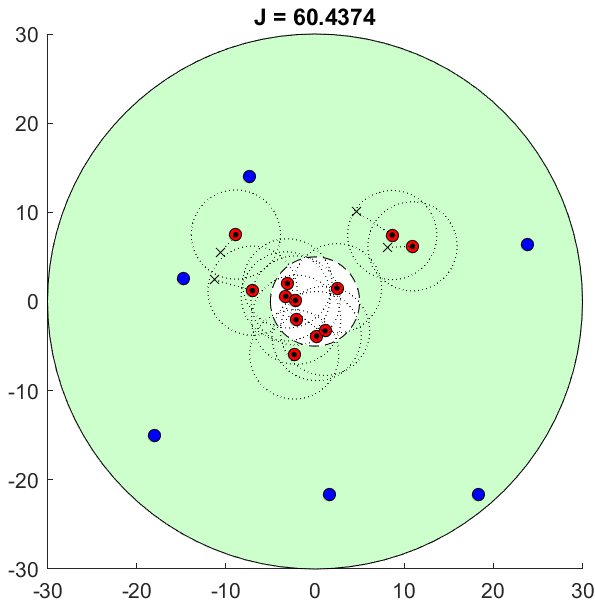}
    \hfill
    \includegraphics[width=0.27\linewidth]{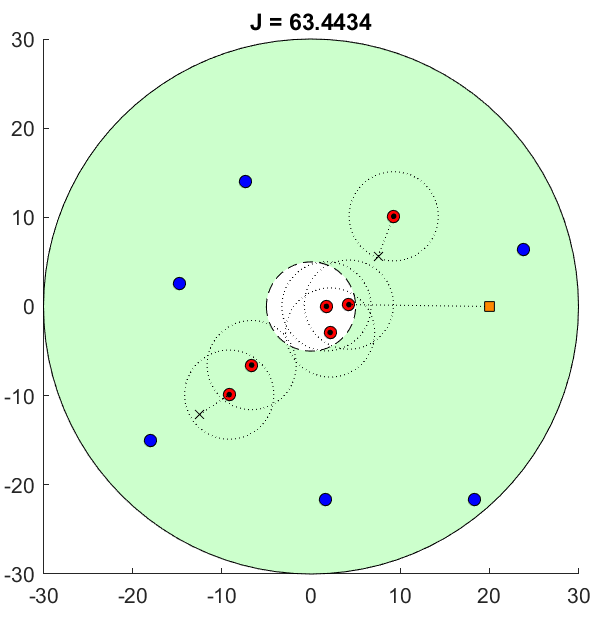}
    \hfill
    \includegraphics[width=0.27\linewidth]{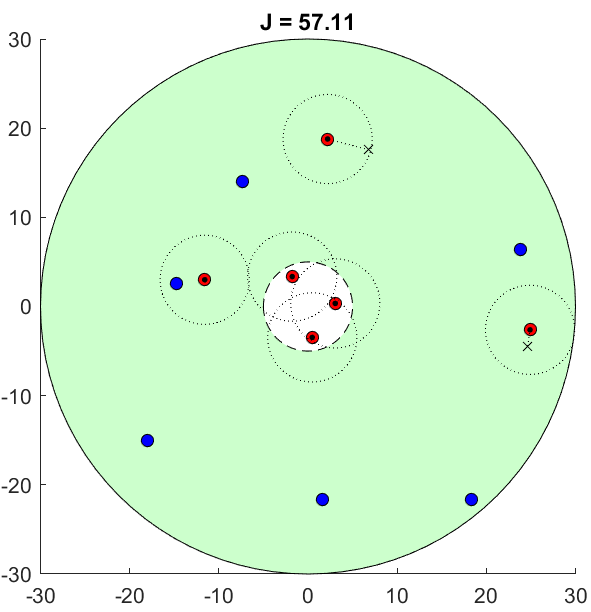}
    \hfill
    \end{center}
    \caption{The simulation state at (left to right) $t=100, 200, 400$ seconds; these show the initial configuration, configuration after half the robots are removed, and final configuration of the robots.}
    \label{fig:colony frames}
\end{figure*}

\textbf{Human Interference} occurs halfway through the simulation, where $N=6$ robots are removed from the simulation.
This represents a human colonist recruiting these robots to help them with another task, such as gathering samples or remote research.
We randomly select 6 robots to remove at $t_r = 0.5(t_1 + t_2) = 172.5$ seconds, which is exactly halfway between cargo deliveries.
If an idle robot is removed, we simply reduce the total number of robots $N$ by one; if a foraging or cargo carrying robot is removed, we also reduce $n_1$ or $n_2$ by $1$, respectively.
In the case of a cargo carrying robot, we also replace the robot's carried cargo at the depot by increasing $c(t)$ by the number of robots recruited away from task $2$.

Next we present the results of an individual simulation run, followed by aggregate results from a round of Monte Carlo simulations with random energy source placements.
The state of the system is presented at $4$ different time steps in Fig. \ref{fig:colony frames}.
These demonstrate the initial behavior of the colony; initially $5$ of the $12$ robots are actively harvesting energy, while the remaining $7$ robots are idle.
At $t_1 = 120$ s 10 units of cargo are delivered at the coordinate $(20, 0)$ as denoted by the orange square, and at $t_r = 172.5$ s $6$ of the robots are recruited by a human and removed from the system.
The second snapshot at $t=200$ s shows how the remaining $6$ robots are allocated, $3$ are foraging for energy sources, $1$ is transporting cargo, and $2$ remain idle.
The final snapshot at $400$ s shows the final state of the system after all cargo has been delivered.

Figs. \ref{fig:energy-task} and \ref{fig:cargo-task} demonstrate the performance of our approach on the energy harvesting and cargo delivering tasks, and Fig. \ref{fig:num-robots} shows the allocation of robots to tasks..
In Fig. \ref{fig:energy-hist}, the solid red line corresponds to the energy level remaining in the colony as described by the energy dynamics \eqref{eq:energy-harvesting-dynamics}.
The dashed black line is the total system energy, which subtracts the energy used by each robot via \eqref{eq:robot-energy} from the colony energy.
Each vertical step in the red curve corresponds to an energy source being delivered, which adds $4$ J of energy to the colony.
At the same time instant, the harvesting robots recharge their batteries.
The resulting sawtooth shape highlights the importance of the $0.1$ factor in the robot energy consumption \eqref{eq:robot-energy}; since the robots return more energy than they return to the system, the colony energy experiences an overall increase in energy at each of these time steps.

\begin{figure}[ht]
    \centering
    \includegraphics[width=\linewidth]{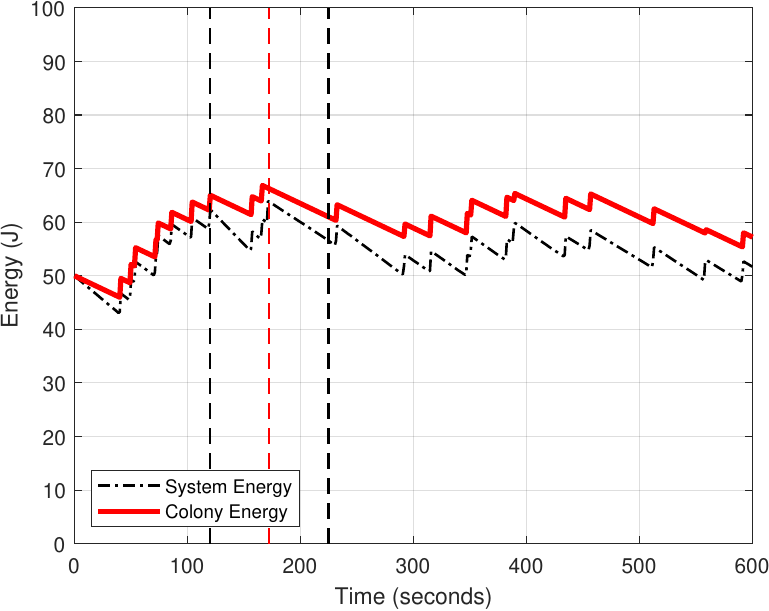}
    \caption{Energy level for the colony (red) and entire system (black, dashed) during the colony maintenance simulation.}
    \label{fig:energy-task}
\end{figure}

Fig. \ref{fig:cargo-task} shows how much cargo is available at the depot for the duration of the simulation.
Initially the depot is empty until $t_1=120$ s, where $10$ units of cargo are delivered.
The steep drop around $t_1=160$ s indicates that $8$ units of cargo are delivered relatively quickly.
Note that the cargo is approximately $15$ m from the edge of the colony, and the robots have a maximum speed of $1$ m/s, so this indicates that all $8$ robots began to deliver cargo almost immediately.
The dashed black line at $t_r=172.5$ denotes where half of the robots in the system are randomly removed, and we can see the cargo count goes to $0$ just before the second delivery at $t_2=220$ seconds.

\begin{figure}[ht]
    \centering
    \includegraphics[width=0.8\linewidth]{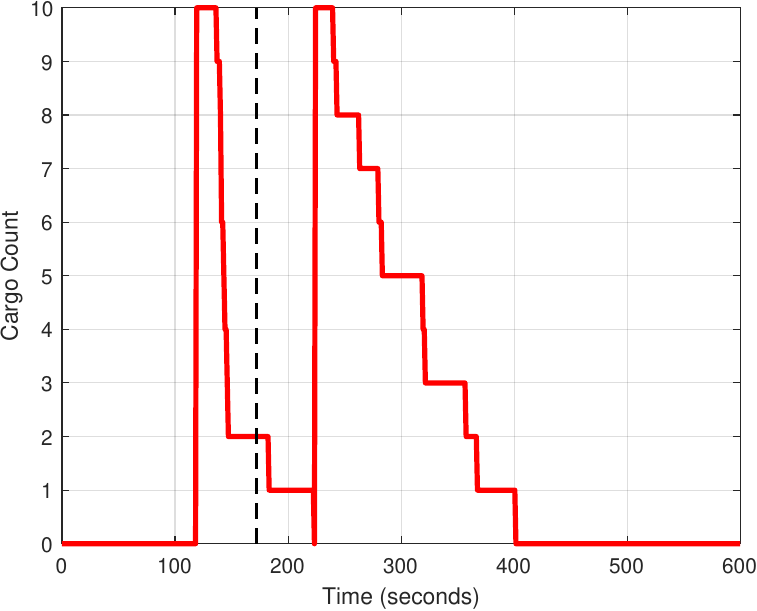}
    \caption{Amount of undelivered cargo at each time step, the vertical steps correspond to the arrival and delivery of cargo, respectively.}
    \label{fig:cargo-task}
\end{figure}

Finally, Fig \ref{fig:num-robots} shows a stacked chart for the total robots available and the number of robots assigned to each task.
Due to the low initial energy of the colony, initially $8$ of the $12$ robots were assigned to energy harvesting.
That number gradually decreased as the energy levels increased, until $8$ robots are assigned to deliver cargo at $t_1 = 120$ seconds.
Near the end of the simulation, after half the robots are removed and all the cargo is delivered, the system settles down to having just $3-4$ robots harvesting energy while the rest remain idle.
Overall, Figs. \ref{fig:energy-task}, \ref{fig:cargo-task}, and Fig. \ref{fig:num-robots} demonstrate that the robots were successfully able to deliver $20$ pieces of cargo while maintaining a positive net energy level while having half the team removed partway through.

\begin{figure}[ht]
    \centering
    \includegraphics[width=0.8\linewidth]{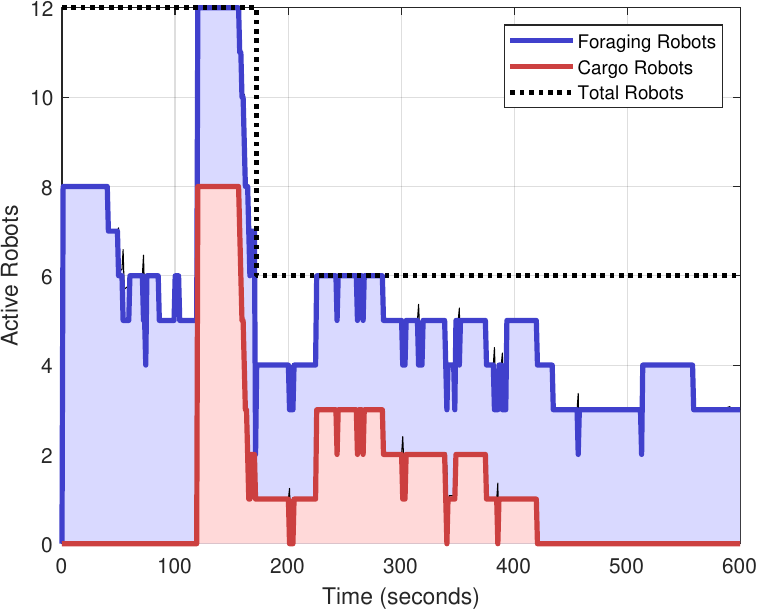}
    \caption{Caption}
    \label{fig:num-robots}
\end{figure}

Next, we demonstrate the performance of our algorithm with a series of Monte Carlo experiments.
We performed $100$ experiments where the location of energy sources throughout the environment was randomized.
Fig. \ref{fig:energy-hist} shows the distribution of energy at the final time step, while Fig. \ref{fig:cargo-hist} shows how many seconds it took to successfully deliver all the cargo.

The energy histograms in Fig. \ref{fig:energy-hist} show two interesting features.
First, the most frequent outcome is the $45-50$ J energy bin.
As with the previous scenario, we initialized the system with $50$ J of energy.
This demonstrates that for a plurality of cases, the system did not lose more than $5$ J of energy throughout the entire simulation.
The other interesting feature is that only $1$ of the $100$ cases reached an energy level of $0$, which caused the simulation to fail. 
While we don't provide any strict guarantees on maintaining the signal value above zero, this demonstrates the resilience of our approach to both the removal of random robots stochasticity in the location of energy sources.

The cargo delivering histogram in Fig. \ref{fig:cargo-hist} demonstrates how many seconds it takes to successfully delivered all of the cargo after $t = 120$ seconds, when the first load of cargo arrives.
Over the $100$ trials, there were only $9$ cases where all of the cargo was not delivered by $t_f = 600$ seconds.
Surprisingly, none of these is the case where the system ran out of energy.
Instead, this was caused by deadlock between the cargo delivering robots and other robots leaving the colony to harvest energy.
This is a known problem with our CBF-based collision avoidance controller (Problem \ref{prb:cbf}), and is not attributable to our assignment algorithm.
For the remaining $91$ cases, all cargo is within $385$ seconds, with median of $287$ seconds.

\begin{figure}
    \centering
    \includegraphics[width=0.8\linewidth]{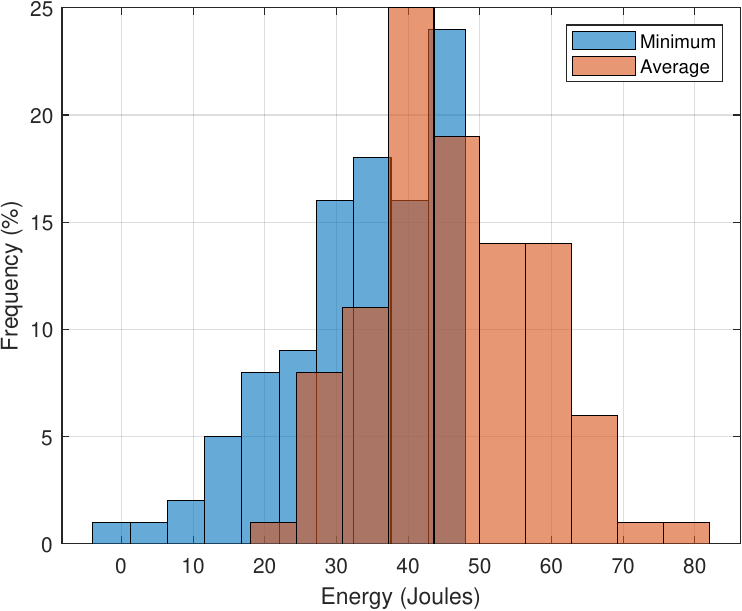}
    \caption{Caption}
    \label{fig:energy-hist}
\end{figure}

\begin{figure}
    \centering
    \includegraphics[width=0.8\linewidth]{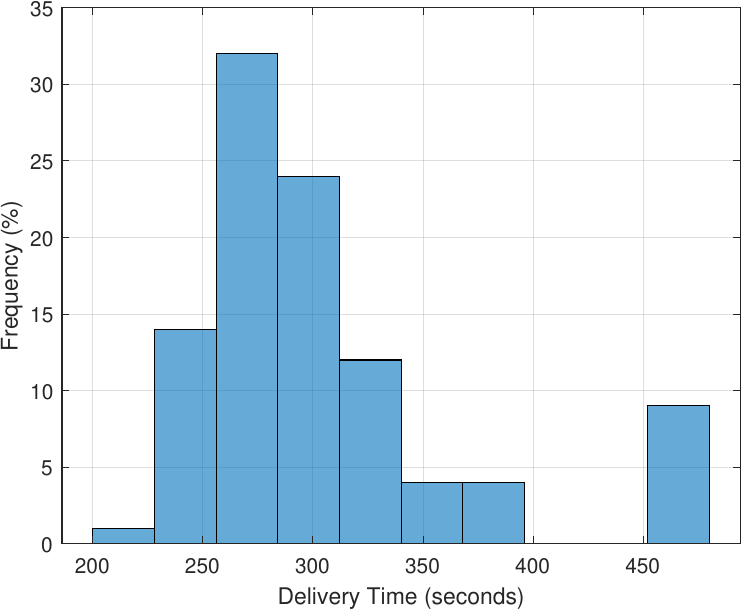}
    \caption{Caption}
    \label{fig:cargo-hist}
\end{figure}

\subsection{Persistent Monitoring} \label{eq:monitoring}

\begin{figure*}[t]
    \begin{center}
    \hfill
    \includegraphics[width=0.27\linewidth]{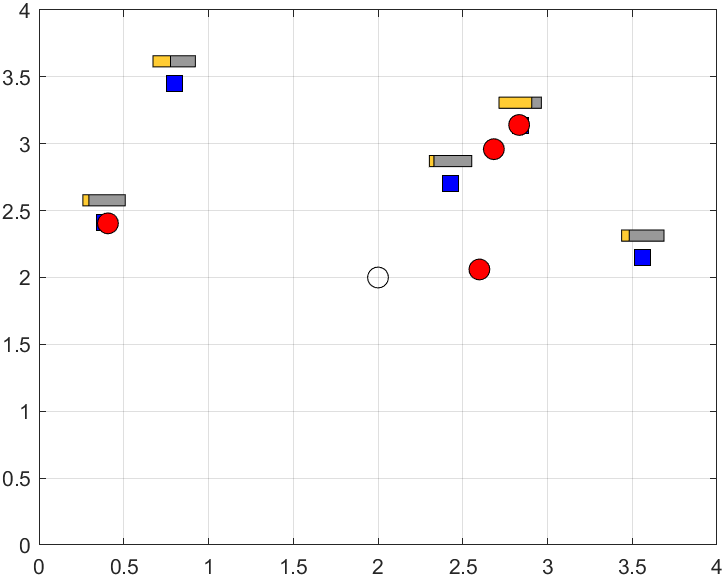}
    \hfill
    \includegraphics[width=0.27\linewidth]{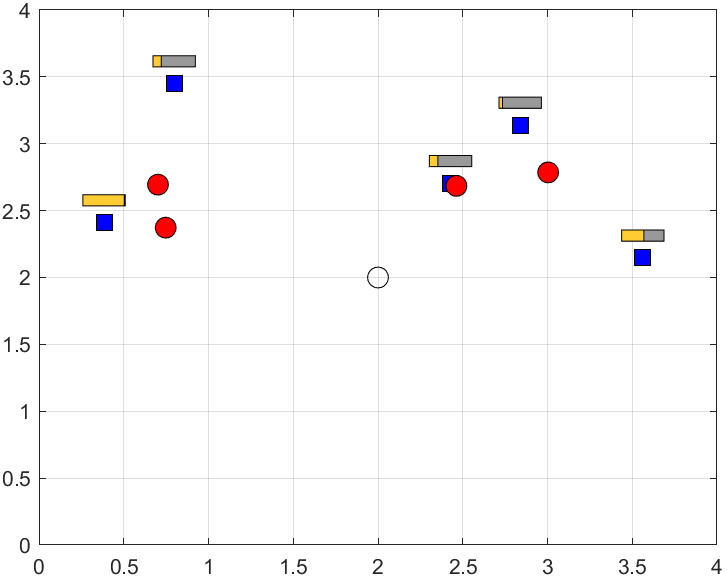}
    \hfill
    \includegraphics[width=0.27\linewidth]{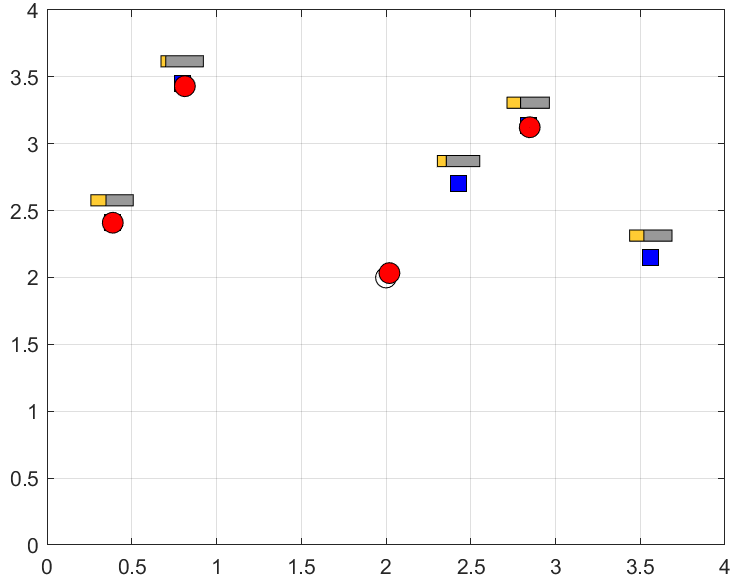}
    \hfill
    \end{center}
    \caption{The simulation state at (left to right) $t=300, 585, 985$ seconds. Robots are red circles, nodes are blue squares, the grey circle at $(2, 2)$ is the idle location, and the yellow bars show $\frac{R_k(t)}{R_{\max}}$ for each node.}
    \label{fig:pm_frames}
\end{figure*}

Next, we consider a persistent monitoring problem where $N=4$ robots must monitor $M=5$ nodes.
Each node $k$ accumulates information following linear dynamics \cite{hall2023bilevel},
\begin{equation}
    \dot{R}_k = A - B\sum_{i=1}^{N}\mathbbm{1}_k^i,
\end{equation}
where $A > 0$ is the rate that nodes accumulate information, $B > 0$ is the rate that robots collect information, and $\mathbbm{1}_i$ indicates whether robot $i$ is within range $r_0$ of node $k$.
We enforce minimum and maximum information constraints,
\begin{equation}
    0\leq R_k(t) \leq R_{\max}.
\end{equation}
Our numerical values for the simulation parameters are given in Table \ref{tab:pm-params}.

\begin{table}[ht]
    \centering
    \caption{The parameters corresponding to node information, domain size, and robot movement for the persistent monitoring problem.}
    \begin{tabular}{cccccc}
        $A$ & $B$ & $R_{0}$ & $r$ & $D$ & $v_{\max}$ \\\toprule
        $0.75$ & $2$ & $1$ & $0.04$ m & $4$ m & $4$ m/s 
    \end{tabular}
    \label{tab:pm-params}
\end{table}

We initialize each node to a value of $R_k(t=0) = 0$, i.e., there is no information available.
To map the node state to  a signal, we use a linear function,
\begin{equation}
    s_k(t) = 1 - \frac{R_k(t)}{R_{\max}},
\end{equation}
which satisfies Assumption \ref{smp:monotonicity}.
We also introduce a heterogeneous cost, which is the distance between each robot-node pair, so for each $i = 1,2,\dots N,$ and $m=1, 2,\dots,M$,
\begin{equation}
    C_m^n = \sqrt{||\bm{p}_m - \bm{p}^n||^2},
\end{equation}
where $\bm{p}_m$ is the position of node $m$, and $\bm{p}^n$ is the position of robot $n$.
Finally, we normalized the cost $C_m^N$ by the size $R_0$ of the domain.
This ensures that the cost $C_m^n \leq 1$, which means that there is some values of $s_m \geq 0$ such that $p_m^n > 0$ by \eqref{eq:utility}.

\textbf{The behavior of each robot} follows the same rules each task in order to determine the desired velocity $\bm{v}^d$ used in Problem \ref{prb:cbf}.
For each task $k=0,1,2,\dots,M$, the robot simply moves toward $\bm{p}_k$ at the maximum speed $v_{\max}$.
For $k=0$, the robot moves toward an ``idle location'' at the center of the domain.
This can be seen in Fig. \ref{fig:pm_frames}, which shows the position every robot (red circle) moving between the nodes (blue squares).
Note that to satisfy Assumption \ref{smp:hysteresis}, each robot remains on its assigned node $k$ until the node reaches a value of $R_k = 0$.

The values of $\frac{R_k(t)}{R_{\max}}$ are shown for each node in Fig. \ref{fig:pm_cost}.
These show a characteristic sawtooth shape, where each node linearly increases at a rate $A$ until a robot arrives and reduces the node at a rate of $A-B$.
The simulation snapshots in Fig. \ref{fig:pm_frames} show the emergence of dynamic teaming.
At $t=300$ seconds (left), two robots in the upper-right are approaching the single node with a large value of $R_k$, while a third robot moves toward the idle location, and the fourth robot is collecting information in the bottom-left.
There is only a single time around $t=600$ seconds where one node reaches its maximum information value of $R_k(t) = R_{\max}$.
This can be seen in the second image in Fig. \ref{fig:pm_frames}, where two robots are cooperatively approaching the blue node in the bottom left.
The final frame of Fig. \ref{fig:pm_frames} shows how the robots each split up to gather information from a single node later in the simulation.
This simulation demonstrates how we can efficiently allocate robots to tasks, even in the face of heterogeneity, under the strict constraints imposed by Assumption \ref{smp:hysteresis}.

\begin{figure}
    \centering
    \includegraphics[width=0.7\linewidth]{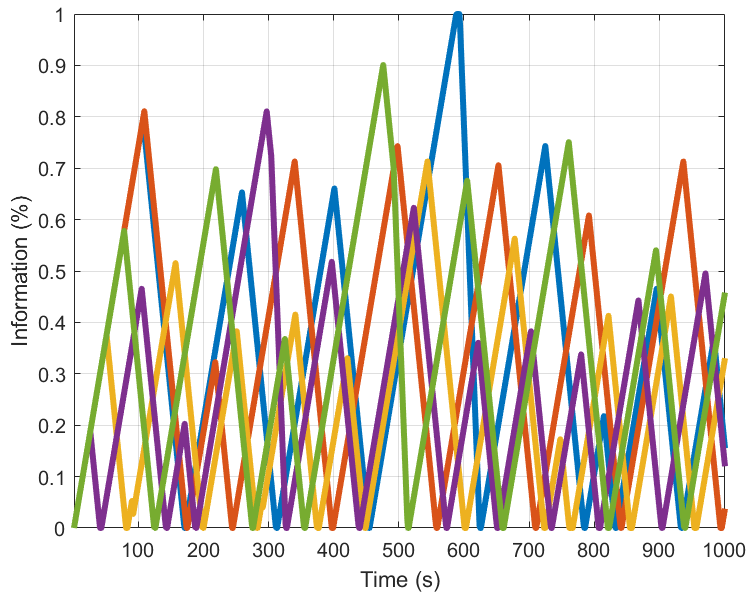}
    \caption{Normalized information accumulated at each node. A single node reaches the value of $R_{\max}$ at $t=600$ seconds.}
    \label{fig:pm_cost}
\end{figure}

\section{Conclusion} \label{sec:conclusion}

In this work, we presented an algorithm for the multi-robot task assignment using the framework of global games.
We derived the unique mixed Nash equilibrium for the case of all robots being homogeneous, and proved that the heterogeneous case reduces to a small number of homogeneous problems.
Furthermore, we provided an algorithm to determine the general solution for any number of robots and tasks in real time.
Finally, we presented two simulation results that show the applicability of our approach while also demonstrating its performance and resilience to the addition of tasks and removal of robots in real time.

Future work relies on more realistic dynamics for the robots and tasks, for example, experiments in hardware.
Introducing nonlinearity to the model through the signal, or learning a mapping from state to signal with a neural network, is another interesting direction of research.
Avoiding deadlock when some tasks that require the cooperation of multiple robots, e.g., cooperative transport of multiple objects, is another interesting direction of research--especially avoiding the case where one robot chatters back and forth between two task assignments.
Finally, finding probabilistic guarantees for the finite time completion of a task, such as never running out of energy, would be useful for long-duration deployment of physical systems in remote locations.

\bibliographystyle{unsrt}
\bibliography{mendeley,vsgc,ieeexplore}

\end{document}